\newtheorem{theorem}{Theorem}[section]
\newtheorem{lemma}[theorem]{Lemma}
\newtheorem{definition}[theorem]{Definition}
\newtheorem{remark}[theorem]{Remark}
\newcommand{\norm}[1]{\left\Vert#1\right\Vert}
\newcommand{\abs}[1]{|#1|}
\newcommand{\set}[1]{\left\{#1\right\}}
\newcommand{\kron}{\otimes}
\newcommand{\dom}[1]{\mathrm{dom}(#1)}
\renewcommand{\vec}{\mathbf{vec}}
\newcommand{\mat}{\mathbf{mat}}
\newcommand{\trace}{\mathrm{tr}}
\newcommand{\cov}{\boldsymbol{\Sigma}}
\newcommand{\invcov}{\boldsymbol{\Sigma}^{-1}}
\newcommand{\vectornorm}[1]{\|#1\|}
\newcommand{\T}{{\bf \Theta}}
\newcommand{\tr}{\text{tr}}
\DeclareMathAlphabet{\mathbbb}{U}{bbold}{m}{n}
\DeclareMathOperator*{\argmin}{arg\,min}
\icmltitlerunning{A proximal Newton framework for composite minimization}
\begin{document} 

\twocolumn[
\icmltitle{A proximal Newton framework for composite minimization: Graph learning without Cholesky decompositions and matrix inversions}

% Graph learning without Cholesky decompositions and matrix inversions}

% It is OKAY to include author information, even for blind
% submissions: the style file will automatically remove it for you
% unless you've provided the [accepted] option to the icml2013
% package.
\icmlauthor{Quoc Tran Dinh}{quoc.trandinh@epfl.ch}
\icmlauthor{Anastasios Kyrillidis}{anastasios.kyrillidis@epfl.ch}
\icmlauthor{Volkan Cevher}{volkan.cevher@epfl.ch}
\icmladdress{LIONS, \'{E}cole Polytechnique F\'{e}d\'{e}rale de Lausanne, Switzerland}
% \icmlauthor{Your CoAuthor's Name}{email@coauthordomain.edu}
% \icmladdress{Their Fantastic Institute,
%             27182 Exp St., Toronto, ON M6H 2T1 CANADA}
% 
% \icmlauthor{Volkan Cevher}{volkan.cevher@epfl.ch}
% \icmladdress{Laboratory for Information and Inference Systems (LIONS), STL/IEL, EPFL, CH-1015 Lausanne, Switzerland}

% You may provide any keywords that you 
% find helpful for describing your paper; these are used to populate 
% the "keywords" metadata in the PDF but will not be shown in the document
\icmlkeywords{Graph selection, sparse covariance estimation, proximal-Newton method, structured convex optimization}

\vskip 0.3in
]
\begin{abstract}
We propose an algorithmic framework for convex minimization problems of composite functions with two terms:  a self-concordant part and a possibly nonsmooth regularization part.  Our method is a new proximal Newton algorithm with local quadratic convergence rate. As a specific problem instance, we consider sparse precision matrix estimation problems in graph learning. Via a careful dual formulation and a novel analytic step-size selection, we instantiate an algorithm within our framework for graph learning that avoids {\it Cholesky decompositions and matrix inversions}, making it attractive for parallel and distributed implementations. 
% making it extremely scalable via a parallel or distributed implementation. 
\end{abstract}
%%%%%%%%%%%%%%%%%%%%%%%%%%%%%%%%%%%%%%%%%%%%%%%%%%%%%%%%%%%%%%%%%%%%%%%%%%%%%%%%%%%%%%%%%%%%%%%%%%%%%%%%%%
%%+ 1. Introduction.
%%%%%%%%%%%%%%%%%%%%%%%%%%%%%%%%%%%%%%%%%%%%%%%%%%%%%%%%%%%%%%%%%%%%%%%%%%%%%%%%%%%%%%%%%%%%%%%%%%%%%%%%%%
\section{Introduction}\label{sec:intro}
Sparse inverse covariance matrix estimation is a key step in graph learning. To understand the setup, let us consider learning a Gaussian Markov
random field (GMRF) of $p$ nodes/variables from a dataset $\mathcal{D} := \set{{\bf x}_{1}, {\bf x}_{2}, \dots, {\bf x}_{m}}$, where ${\bf x}_j \in
\mathcal{D}$ is a $p$-dimensional random vector, drawn from the Gaussian distribution $\mathcal{N}(\boldsymbol{\mu}, \cov)$. Let $\T = \invcov$ be the inverse covariance
(or the precision) matrix for the model. To satisfy the conditional independencies with respect to the GMRF, $\T$ must have zero in $\T_{ij}$
corresponding to the absence of an edge between nodes $i$ and $j$ \cite{Dempster1972}.

To learn the underlying graph structure from $\invcov$, one can use the empirical covariance matrix $\widehat{\cov}$. Unfortunately, this approach is 
fundamentally ill-posed since the empirical estimates converge to the true covariance at a $(1/\sqrt{m})$-rate \cite{Dempster1972}. Hence, inferring the true
graph structure accurately requires an overwhelming number of samples. Unsurprisingly, we usually have fewer samples than the ambient dimension, compounding the
difficulty of estimation. 

While the possible GMRF structures are exponentially large, the most interesting graphs are rather simple with a sparse set of edges. Provable
learning of such graphs can be achieved by $\ell_1$-norm regularization in the maximum log-likelihood estimation:
\begin{equation}\label{eq:glearn_prob}
\!\!\!\!\small{\T^{\ast} \!\in\! \argmin_{\T \succ 0} \Big\{\!\underbrace{-\log \det (\T) \!+\! \tr(\widehat{\cov} \T)}_{=:f(\T)} \!+\! \underbrace{\rho
\vectornorm{\vec(\T)}_1}_{=:g(\T)}
\Big \}}, 
\end{equation} 
where $\rho > 0$ is a parameter to balance the fidelity error and the sparsity of the solution and $\vec$ is the vectorization operator. 
Here, $f(\T)$ corresponds to the empirical log-likelihood and $g(\T)$ is the sparsity-promoting term.  
Under this setting, the authors in \cite{Ravikumar2011} prove that $m = \mathcal{O}(d^2\log p)$ is sufficient for correctly estimating the
GMRF, where $d$ is the graph node-degree. Moreover, the above formulation still makes sense for learning other graph models, such as the Ising model, due to the
connection of $f(\T)$ to the Bregman distance \cite{Banerjee2008}. 

Numerical solution methods for solving problem \eqref{eq:glearn_prob} have been widely studied in the literature. 
For instance, in \cite{Banerjee2008, Scheinberg2009, Scheinberg2010, Hsieh2011, Rolfs2012, Olsen2012} the authors proposed first order primal and dual approaches to
\eqref{eq:glearn_prob} and used state-of-the-art structural convex optimization techniques such as coordinate descent methods and Lasso-based procedures.
Alternatively, the authors in \cite{Hsieh2011, Olsen2012} focused on the second order methods and, practically, achieved fast methods with a high accuracy. 
In \cite{Scheinberg2010,Yuan2012}, the authors studied alternating direction methods to solve \eqref{eq:glearn_prob}, while the work in \cite{Li2010} is based on
interior point-type methods. 
Algorithmic approaches where more structure is known a priori can be found in \cite{Lu2010}. 

The complexity of the state-of-the-art approaches mentioned above is dominated by the Cholesky decomposition ($\mathcal{O}(p^3)$ in general), 
which currently creates an important scalability bottleneck. This decomposition appears mandatory since all these approaches employ a guess-and-check step-size
selection procedures to ensure the iterates remain in the positive definite (PD) cone and the inversion of a $p\times p$ matrix, whose theoretical cost normally
scales with the cost of $p\times p$ matrix multiplications ($\mathcal{O}(p^3)$ direct, $\mathcal{O}(p^{2.807})$ Strassen, and $\mathcal{O}(p^{2.376})$
Coppersmith-Winograd). The inversion operation is seemingly mandatory in the optimization of \eqref{eq:glearn_prob} since the calculation of the descent
direction $\nabla f(\T_i) := -\T_i^{-1} + \widehat{\cov}$ requires it, and quadratic cost approximations to $f(\T)$ also need it. Via Cholesky decompositions,
one can first check if the current solution satisfies the PD cone constraint and then recycle the 
decomposition for inversion for the next iteration.

%% a. Contributions.
\noindent\textbf{Contributions:}
We propose a new proximal-Newton framework for solving the general problem of \eqref{eq:glearn_prob} by only assuming that $f(\cdot)$ is self-concordant.  
Our algorithm consists of two phases. 
In Phase 1, we apply a damped proximal Newton scheme with a new, analytic step-size selection procedure, and prove that
our objective function always decreases at least a certain fixed amount. As a result, we avoid globalization strategies such as backtracking line-search
or trust-region procedures in the existing methods.
Moreover, our step-size selection is optimal in the sense that it cannot be improved without additional assumptions on the problem structure. 
In Phase 2, we simply apply the full step proximal-Newton iteration as we get into its provable quadratic convergence  region which we can compute explicitly.
Moreover, we do not require any additional assumption such as the uniform boundedness of the Hessian as in \cite{Lee2012}.

In the context of graph learning, we discuss a specific instance of our framework, which avoids Cholesky decompositions and matrix inversions
\emph{altogether}. 
Hence, the per iteration complexity of our approach is dominated by the cost of $p\times p$ matrix multiplications.  This is because $(i)$ our analytic
step-size selection procedure ensures the positive definiteness of the iterates doing away with \textit{global strategies} such as line-search which demands
the objective evaluations (via Cholesky), and $(ii)$ we avoid calculating the gradient explicitly, and hence matrix inversion by a careful dual formulation. 
As a result, our approach is attractive for distributed and parallel implementations.
 
%% b. Paper outline
\textbf{Paper outline:} 
In Section \ref{sec:preliminary}, we first recall some fundamental concepts of convex optimization and self-concordant functions. Then, we describe the basic
optimization set up and show the unique solvability of the problem. 
In Section \ref{sec:prob_setting} we outline our algorithmic framework and describe its analytical complexity.
We also deal with the solution of the subproblems by applying the new dual approach in this section. 
Section \ref{sec:app_graph_select} presents an application of our theory to graph selection problems.
Experimental results on real graph learning problems can be found in Section \ref{sec:num_test}. 

%%%%%%%%%%%%%%%%%%%%%%%%%%%%%%%%%%%%%%%%%%%%%%%%%%%%%%%%%%%%%%%%%%%%%%%%%%%%%%%%%%%%%%%%%%%%%%%%%%%%%%%%%%%%%%%%%%%%%%%%%%%%%%%%%%%%%%%%%%%%
%%%%%%%%%%%%%%%%%%%%%%%%%%%%%%%%%%%%%%%%%%%%%%%%%%%%%%%%%%%%%%%%%%%%%%%%%%%%%%%%%%%%%%%%%%%%%%%%%%%%%%%%%%
%%+ 2. Preliminary.
%%%%%%%%%%%%%%%%%%%%%%%%%%%%%%%%%%%%%%%%%%%%%%%%%%%%%%%%%%%%%%%%%%%%%%%%%%%%%%%%%%%%%%%%%%%%%%%%%%%%%%%%%%
\section{Preliminaries}\label{sec:preliminary}
%% 2.a. Basic definitions.
% \paragraph{Basic definitions:} 
\vskip-0.2cm
\noindent\textbf{Basic definitions:}
We reserve lower-case and bold lower-case letters for scalar and vector representation, respectively. Upper-case bold letters denote matrices. Let $\vec$:
$\mathbb{R}^{p \times p} \rightarrow \mathbb{R}^{p^2}$ be the vectorization operator which maps a matrix to a single column, and $\mat$: $\mathbb{R}^{p^2}
\rightarrow \mathbb{R}^{p \times p}$ is the inverse mapping of $\vec$ which transforms a vector to a matrix. For a closed convex function $f$, we denote its
domain by $\dom{f}$, $\dom{f} := \set{x\in\mathbb{R}^n~|~ f(x) < +\infty}$.

%% Definition 2.1.
\begin{definition}[Self-concordant functions (Definition 2.1.1, pp. 12, \cite{Nesterov1994}]\label{concordant}
A convex function $h: \mathbb{R} \rightarrow \mathbb{R} $ is $($standard$)$ self-concordant if $\abs{h'''(x)} \leq 2h''(x)^{3/2}, ~\forall x \in \mathbb{R}$. 
Furthermore, a function $h: \mathbb{R}^{n} \rightarrow \mathbb{R}$ is self-concordant if, for any $t \in \mathbb{R}$, the function $\phi(t) := h({\bf x} + t
{\bf v})$ is self-concordant for all $\mathbf{x}\in\dom{f}$ and ${\bf v} \in \mathbb{R}^{n}$. 
\end{definition}
Let $h \in \mathcal{C}^3(\dom{f})$ be a strictly convex and self-concordant function. 
For a given vector ${\bf v} \in \mathbb{R}^{n}$, the local norm around ${\bf x} \in \dom{f}$ with respect to $h(\cdot)$ is defined as $\norm{\bf
v}_{\bf x} := \left({\bf v}^T\nabla^2h({\bf x}){\bf v}\right)^{1/2}$ while the corresponding dual norm is given as $\norm{\bf v}_{\bf x}^{*} := \left({\bf v}^T\nabla^2 h({\bf x})^{-1}{\bf v}\right)^{1/2}$. 
Let $\omega : \mathbb{R}\to\mathbb{R}_{+}$ be a function
defined as $\omega(t) := t - \ln(1+t)$ and $\omega_{*} : [0,1]\to\mathbb{R}_{+}$ be a function defined as $\omega_{*}(t) := -t - \ln(1-t)$. The functions
$\omega$ and $\omega_{*}$ are both nonnegative, strictly convex and increasing.  Based on \cite{Nesterov2004}[Theorems 4.1.7~\&~4.1.8], we recall the following estimates:
\begin{eqnarray}\label{eq:SC_bounds}
&\omega(\norm{{\bf y} - {\bf x}}_{\bf x}) + \nabla{h}({\bf x})^T({\bf y} - {\bf x}) + h({\bf x}) \leq h({\bf y}),\label{eq:SC_bound1}\\
&h({\bf y}) \leq h({\bf x}) + \nabla{h}({\bf x})^T({\bf y} - {\bf x}) + \omega_{*}(\norm{{\bf y} - {\bf x}}_{\bf x}),\label{eq:SC_bound2}
\end{eqnarray}
where \eqref{eq:SC_bound1} holds for all ${\bf x}, {\bf y}\in\dom{f}$, and \eqref{eq:SC_bound2} holds for all ${\bf x}, {\bf y}\in\dom{f}$ such that $\norm{{\bf
y} - {\bf x}}_{\bf x} < 1$.

%% 2.b. Problem statement.
% \paragraph{Problem statement:}
\noindent\textbf{Problem statement:}
In this paper, we consider the following structural convex optimization problem:
\begin{align}\label{eq:min_Fx}
\min_{{\bf x} \in \mathbb{R}^n} \Big \{ F({\bf x})~~|~~ F({\bf x}) := f({\bf x}) + g({\bf x}) \Big\},
\end{align} 
where $f({\bf x})$ is a {\it convex, self-concordant} function and $g({\bf x})$ is a proper, lower semicontinuous and possibly nonsmooth convex regularization term. 
It is easy to certify that problem \eqref{eq:glearn_prob} can be transformed into \eqref{eq:min_Fx} by using the tranformation ${\bf x} := \vec(\T)$:
\begin{equation*}
\footnotesize{f({\bf x}) := \left\{\begin{array}{ll}
    \!\!-\log\det(\mat({\bf x})) \!+\! \trace(\widehat{\cov}\mat({\bf x})), &\!\!\! \mat({\bf x})\succ 0,\\
    \!\!+\infty &\!\!\!\textrm{otherwise}, \\
\end{array} \nonumber
\right.}
\end{equation*} 
$g({\bf x}) := \rho\norm{{\bf x}}_1$ and $n := p^2$.

%% 2.c. Proximity operator.
\noindent\textbf{Proximity operator:}  
A basic tool to handle nonsmooth convex functions is the proximity operator: let $g$ be a proper lower semicontinuous, possibly nonsmooth and convex in $\mathbb{R}^n$. We denote by $\partial{g}({\bf x})$ the subdifferential of
$g$ at ${\bf x}$. Let $f$ be a self-concordant function and ${\bf x}\in\mathrm{dom}(f)$ be fixed. 
We define $P_g^{\bar{{\bf x}}}({\bf u}) := (\nabla^2f(\bar{{\bf x}}) + \partial{g})^{-1}({\bf u}) $ for ${\bf u} \in \mathbb{R}^n$. 
This operator is a nonexpansive mapping, i.e.,
\begin{equation}\label{eq:P_g_property2}
\norm{P_g^{{\bar{\bf x}}}({\bf u}) - P_g^{{\bar{\bf x}}}({\bf v})}_{{\bf x}} \leq \norm{{\bf u}-{\bf v}}^{*}_{{\bf x}},~~\forall \mathbf{u}, \mathbf{v}.
\end{equation}
%for all $\mathbf{u}$ and $\mathbf{v}$.

%% 2.d. The solvability of the problem.
\noindent\textbf{Unique solvability of the problem:}
We generalize the result in \cite{Hsieh2011} to show that problem \eqref{eq:min_Fx} is uniquely solvable.

%% Lemma 2.2.
\begin{lemma}\label{le:unique_solution}
%Suppose that the function $f$ is self-concordant and $g$ is proper lower semicontinuous and convex. 
For some ${\bf x}\in\dom{F}$, let $\lambda({\bf x}) := \norm{\nabla{f}({\bf x}) + {\bf v}}_{\bf x}^{*} < 1$ for ${\bf v}\in\partial{g}({\bf x})$. 
Then the solution ${\bf x}^{*}$ of \eqref{eq:min_Fx} exists and is unique. 
\end{lemma}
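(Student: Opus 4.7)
The plan is to prove existence through a sublevel-set compactness argument built on the self-concordant lower bound \eqref{eq:SC_bound1}, and uniqueness from strict convexity of $F$. The central quantity will be the ``Newton-decrement''--type distance $\lambda(\mathbf{x})$, which the assumption puts strictly below $1$.

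First, I would fix a subgradient $\mathbf{v}\in\partial g(\mathbf{x})$ realising $\lambda(\mathbf{x})=\norm{\nabla f(\mathbf{x})+\mathbf{v}}_{\mathbf{x}}^{*}<1$ and, for an arbitrary $\mathbf{y}\in\dom{F}$, add \eqref{eq:SC_bound1} applied to $f$ at $\mathbf{x}$,
\[
 f(\mathbf{y}) \ge f(\mathbf{x})+\nabla f(\mathbf{x})^{T}(\mathbf{y}-\mathbf{x})+\omega(\norm{\mathbf{y}-\mathbf{x}}_{\mathbf{x}}),
\]
to the subgradient inequality $g(\mathbf{y})\ge g(\mathbf{x})+\mathbf{v}^{T}(\mathbf{y}-\mathbf{x})$. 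This yields
\[
 F(\mathbf{y})-F(\mathbf{x})\;\ge\;(\nabla f(\mathbf{x})+\mathbf{v})^{T}(\mathbf{y}-\mathbf{x})\;+\;\omega(\norm{\mathbf{y}-\mathbf{x}}_{\mathbf{x}}).
\]

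Second, applying the generalised Cauchy--Schwarz inequality between the local norm and its dual, $(\nabla f(\mathbf{x})+\mathbf{v})^{T}(\mathbf{y}-\mathbf{x})\ge-\lambda(\mathbf{x})\norm{\mathbf{y}-\mathbf{x}}_{\mathbf{x}}$, and writing $r:=\norm{\mathbf{y}-\mathbf{x}}_{\mathbf{x}}$, I obtain
\[
 F(\mathbf{y})-F(\mathbf{x})\;\ge\;(1-\lambda(\mathbf{x}))\,r-\ln(1+r)\;=:\;\varphi(r).
\]
Because $\lambda(\mathbf{x})<1$, $\varphi$ is continuous with $\varphi(0)=0$ and $\varphi(r)\to+\infty$ as $r\to\infty$, so there exists a finite $\bar r$ beyond which $\varphi(r)>0$. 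Consequently the sublevel set $\mathcal{L}:=\set{\mathbf{y}\in\dom{F}:F(\mathbf{y})\le F(\mathbf{x})}$ is contained in the Dikin-type ellipsoid $\set{\mathbf{y}:\norm{\mathbf{y}-\mathbf{x}}_{\mathbf{x}}\le\bar r}$, and is therefore bounded in Euclidean norm, using that $\nabla^{2}f(\mathbf{x})\succ 0$ (a nondegeneracy implicit in the very definition of the dual local norm $\norm{\cdot}_{\mathbf{x}}^{*}$). Together with the lower semicontinuity of $F$, $\mathcal{L}$ is compact; hence $F$ attains its infimum on $\mathcal{L}$, which establishes existence.

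Uniqueness then follows from strict convexity of $F$: a nondegenerate self-concordant $f$ (with $\nabla^{2}f\succ 0$ on its open domain) is strictly convex, and adding the convex $g$ preserves strict convexity. The one delicate step I anticipate is passing from ``bounded in the local norm $\norm{\cdot}_{\mathbf{x}}$'' to ``bounded in Euclidean norm'', which rests on the positive definiteness of $\nabla^{2}f(\mathbf{x})$; for the graph-learning instance \eqref{eq:glearn_prob} this is automatic since $\nabla^{2}f(\mathbf{x})=\mat(\mathbf{x})^{-1}\kron\mat(\mathbf{x})^{-1}\succ 0$ on the PD cone, and in the general setting it is part of the standing nondegeneracy assumption on $f$.
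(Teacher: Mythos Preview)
Your proposal is correct and follows essentially the same route as the paper's proof: combine the self-concordant lower bound \eqref{eq:SC_bound1} with the subgradient inequality for $g$, apply the dual-norm Cauchy--Schwarz bound to obtain $F(\mathbf{y})-F(\mathbf{x})\ge \omega(r)-\lambda(\mathbf{x})r$, and use $\lambda(\mathbf{x})<1$ to conclude the sublevel set is bounded. Your handling of uniqueness via strict convexity of $F$ is in fact more transparent than the paper's terse ``follows from the increase of $\omega$'', and your explicit remark on passing from local-norm to Euclidean boundedness (and on lower semicontinuity for closedness of the sublevel set) fills small gaps the paper leaves implicit.
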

The proof of this lemma can be done similarly as Theorem 4.1.11, pp. 187 in \cite{Nesterov2004}. 
For completeness, we provide it in the supplementary document.

%%%%%%%%%%%%%%%%%%%%%%%%%%%%%%%%%%%%%%%%%%%%%%%%%%%%%%%%%%%%%%%%%%%%%%%%%%%%%%%%%%%%%%%%%%%%%%%%%%%%%%%%%%
%%+ 3. Two-phase proximal Newton method.
%%%%%%%%%%%%%%%%%%%%%%%%%%%%%%%%%%%%%%%%%%%%%%%%%%%%%%%%%%%%%%%%%%%%%%%%%%%%%%%%%%%%%%%%%%%%%%%%%%%%%%%%%%
\section{Two-phase proximal Newton method}\label{sec:prob_setting}
Our algorithmic framework is simply a proximal-Newton method which generates an iterative sequence $\set{{\bf x}^k}_{k\geq 0}$ starting from ${\bf x}^0\in
\dom{F}$. The
new point ${\bf x}^{k+1}$ is computed as ${\bf x}^{k+1} = {\bf x}^k + \alpha_k{\bf d}^k$, where $\alpha_k\in (0, 1]$ is a step size and ${\bf d}^k$ is the
proximal-Newton-type direction as the solution to the subproblem: 
\begin{equation}\label{eq:cvx_subprob1}
\min_{{\bf d}} \left\{ Q({\bf d}; {\bf x}^k) + g({\bf x}^k + {\bf d}) \right\}.\tag{$\mathcal{Q}(\mathbf{x}^k)$}
\end{equation}
Here, $Q({\bf d}; {\bf x}^k)$ is the following quadratic surrogate of the function $f$ around ${\bf x}^k$:
\begin{align}\label{eq:Q_x}
Q({\bf d}; {\bf x}^k) := f({\bf x}^k) \!+\! \nabla{f}({\bf x}^k)^T{\bf d} \!+\! \frac{1}{2}{\bf d}^T\nabla^2{f}({\bf x}^k){\bf d}. 
\end{align}
We denote $\mathbf{d}^k$ the unique solution of \ref{eq:cvx_subprob1}.
The optimality condition for \ref{eq:cvx_subprob1} is written as follows:
\begin{equation}\label{eq:cvx_subprob_optimality}
{\bf 0} \in \partial{g}({\bf x}^k + {\bf d}^k) + \nabla{f}({\bf x}^k) + \nabla^2f({\bf x}^k){{\bf d}^k}.
\end{equation}

%%% Fixed-point characterization.
\noindent\textbf{Fixed-point characterization.} For given $\bf{x} \in\dom{F}$, if we define $S({\bf x}) := \nabla^2f({\bf x}){\bf x} - \nabla{f}({\bf x})$ then
the unique solution ${\bf d}^k$ of \ref{eq:cvx_subprob1} can be computed as
\begin{equation}\label{eq:search_dir}
{\bf d}^k := \left(P_g^{{\bf x}^k}\circ S\right)({\bf x}^k) - {\bf x}^k = - (\mathbb{I} - R_g)({\bf x}^k).
\end{equation}
Here, $R_g(\cdot) := \left(P_g^{{\bf x}}\circ S\right)(\cdot) \equiv P_g^{{\bf x}}(S(\cdot))$.
The next lemma shows that the fixed point of $R_g$ is the unique solution of \eqref{eq:min_Fx}. The proof is straightforward, and
is omitted.

%% Lemma 1.1.
\begin{lemma}\label{le:fixed_point}
Let $R_g$ be a mapping defined by \eqref{eq:search_dir}. Then ${\bf x}^{\ast}$ is the unique solution of \eqref{eq:min_Fx} if and only if ${\bf x}^{\ast}$ is
the fixed-point
of $R_g$, i.e., ${\bf x}^{\ast} = R_g({\bf x}^{\ast})$.
\end{lemma}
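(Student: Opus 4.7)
The plan is to unfold the definition of $R_g$ using the characterization of $P_g^{\bar{\mathbf{x}}}$ as the inverse of the multivalued operator $\nabla^2 f(\bar{\mathbf{x}}) + \partial g$, and show that the fixed-point equation is literally the first-order optimality condition for \eqref{eq:min_Fx}. Uniqueness will then follow from the preceding Lemma \ref{le:unique_solution}.

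More concretely, I would first rewrite the fixed-point equation $\mathbf{x}^{\ast} = R_g(\mathbf{x}^{\ast}) = P_g^{\mathbf{x}^{\ast}}(S(\mathbf{x}^{\ast}))$ using the definition $P_g^{\bar{\mathbf{x}}}(\mathbf{u}) = (\nabla^2 f(\bar{\mathbf{x}}) + \partial g)^{-1}(\mathbf{u})$. This is equivalent to the inclusion
\begin{equation*}
S(\mathbf{x}^{\ast}) \in \nabla^2 f(\mathbf{x}^{\ast})\mathbf{x}^{\ast} + \partial g(\mathbf{x}^{\ast}).
\end{equation*}
Substituting $S(\mathbf{x}^{\ast}) = \nabla^2 f(\mathbf{x}^{\ast})\mathbf{x}^{\ast} - \nabla f(\mathbf{x}^{\ast})$ and cancelling the $\nabla^2 f(\mathbf{x}^{\ast})\mathbf{x}^{\ast}$ term on each side yields $-\nabla f(\mathbf{x}^{\ast}) \in \partial g(\mathbf{x}^{\ast})$, i.e.\ $\mathbf{0} \in \nabla f(\mathbf{x}^{\ast}) + \partial g(\mathbf{x}^{\ast})$, which is precisely the first-order optimality condition for the convex problem \eqref{eq:min_Fx}.

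Since $F = f + g$ is convex, this inclusion is equivalent to $\mathbf{x}^{\ast}$ being a minimizer of $F$. It remains to invoke Lemma \ref{le:unique_solution}: at any such stationary point, we can take $\mathbf{v} = -\nabla f(\mathbf{x}^{\ast}) \in \partial g(\mathbf{x}^{\ast})$, which gives $\lambda(\mathbf{x}^{\ast}) = \norm{\nabla f(\mathbf{x}^{\ast}) + \mathbf{v}}_{\mathbf{x}^{\ast}}^{\ast} = 0 < 1$, so the hypothesis of Lemma \ref{le:unique_solution} is satisfied and the minimizer is unique. Thus $\mathbf{x}^{\ast} = R_g(\mathbf{x}^{\ast})$ if and only if $\mathbf{x}^{\ast}$ is the unique solution of \eqref{eq:min_Fx}.

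There is no serious obstacle here; the only thing requiring a little care is the algebraic manipulation of the set-valued inverse $(\nabla^2 f(\mathbf{x}^{\ast}) + \partial g)^{-1}$, specifically justifying the cancellation of the linear term $\nabla^2 f(\mathbf{x}^{\ast})\mathbf{x}^{\ast}$ on both sides of the inclusion. This is legitimate because $\partial g(\mathbf{x}^{\ast})$ is a set and translating it by a fixed vector is a bijection, so the inclusion $\mathbf{a} + \mathbf{b} \in \mathbf{a} + \partial g(\mathbf{x}^{\ast})$ is equivalent to $\mathbf{b} \in \partial g(\mathbf{x}^{\ast})$.
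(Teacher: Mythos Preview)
Your argument is correct and is exactly the straightforward unfolding the paper alludes to; in fact the paper omits the proof entirely, stating only that it ``is straightforward, and is omitted.'' Your reduction of the fixed-point equation to the first-order optimality condition $\mathbf{0}\in\nabla f(\mathbf{x}^{\ast})+\partial g(\mathbf{x}^{\ast})$ via the definition of $P_g^{\mathbf{x}^{\ast}}$ and $S$, together with the appeal to Lemma~\ref{le:unique_solution} for uniqueness, is precisely what is needed.
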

Lemma \ref{le:fixed_point} suggests that we can generate an iterative sequence based on the fixed-point principle. 
Under certain assumptions, one can ensure that $R_g$ is contractive and the sequence
generated by this scheme is convergent. Hence, we characterize this below.

%% 3.1. Full-step proximal Newton method.
\subsection{Full-step proximal-Newton scheme}\label{subsec:FPN_sec}
Here, we show that if we start sufficiently close to the solution ${\bf x}^{\ast}$, then we can compute the next iteration $\mathbf{x}^{k+1}$
with full-step $\alpha_{k+1} = 1$, i.e.,
\begin{equation}\label{eq:FPNM}
{\bf x}^{k+1} := {\bf{x}}^k + \mathbf{d}^k,
\end{equation}
where $\mathbf{d}^k$ is the unique solution to \ref{eq:cvx_subprob1}.
We call this scheme the \textbf{full-step proximal Newton} (FPN) scheme.
For any $k\geq 0$, let us define 
\begin{equation}\label{eq:PNT_decrement}
\lambda_k := \norm{\mathbf{x}^{k+1} - \mathbf{x}^k}_{\mathbf{x}^k}.
\end{equation}
We refer to this quantity as the \textit{proximal Newton decrement}.
The following theorem establishes the local quadratic convergence of the FPN scheme \eqref{eq:FPNM}.

%% Theorem 3.2.
\begin{theorem}\label{th:quad_converg_FPNM}
For a given $\mathbf{x}^k$, let ${\bf x}^{k+1}$ be the point generated by the full-step proximal Newton scheme \eqref{eq:FPNM} and $\lambda_k$ be defined by
\eqref{eq:PNT_decrement}. Then, if $\lambda_k <
1-\frac{1}{\sqrt{2}}\approx 0.292893$, it holds that 
\begin{equation}\label{eq:FPNM_estimate}
\lambda_{k+1} \leq (1 - 4\lambda_k + 2\lambda^2_k)^{-1}\lambda^2_k.
\end{equation}
Consequently, the sequence $\set{{\bf x}^k}_{k\geq 0}$ generated by the FPN scheme \eqref{eq:FPNM} starting from $\mathbf{x}^0\in\dom{F}$ such that $\lambda_0
\leq \sigma \leq \bar{\sigma} := \frac{5-\sqrt{17}}{4} \approx 0.219224$, locally converges to the unique solution of \eqref{eq:min_Fx}
at a quadratic rate.
\end{theorem}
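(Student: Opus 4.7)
The plan is to reduce the inequality \eqref{eq:FPNM_estimate} to a standard self-concordant bound on a Newton residual, and then to iterate the resulting quadratic recursion inside the region where the contraction factor is less than one.

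First, I would exploit the fixed-point form \eqref{eq:search_dir}. Since $x^{k+1} = R_g(x^k) = P_g^{x^k}(S(x^k))$, the optimality condition \eqref{eq:cvx_subprob_optimality} at step $k$ supplies a subgradient $v^k := -\nabla f(x^k) - \nabla^2 f(x^k) d^k \in \partial g(x^{k+1})$. I would then introduce the anchor input $\tilde S := \nabla^2 f(x^{k+1}) x^{k+1} + v^k$, which satisfies $P_g^{x^{k+1}}(\tilde S) = x^{k+1}$ by construction. Applying the nonexpansive property \eqref{eq:P_g_property2} of $P_g^{x^{k+1}}$ yields
\begin{equation*}
\lambda_{k+1} = \|P_g^{x^{k+1}}(S(x^{k+1})) - P_g^{x^{k+1}}(\tilde S)\|_{x^{k+1}} \leq \|S(x^{k+1}) - \tilde S\|_{x^{k+1}}^{*},
\end{equation*}
and a direct simplification identifies the right-hand side with the Newton residual $\|\nabla f(x^{k+1}) - \nabla f(x^k) - \nabla^2 f(x^k)(x^{k+1} - x^k)\|_{x^{k+1}}^{*}$.

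Next, I would bound this residual using the self-concordant machinery from Definition \ref{concordant} and \eqref{eq:SC_bound1}--\eqref{eq:SC_bound2}. Writing the residual as $\int_0^1 [\nabla^2 f(x^k + t d^k) - \nabla^2 f(x^k)] d^k\,dt$, applying the standard Hessian sandwich $(1-t\lambda_k)^2 \nabla^2 f(x^k) \preceq \nabla^2 f(x^k + t d^k) \preceq (1-t\lambda_k)^{-2} \nabla^2 f(x^k)$ valid under $t\lambda_k < 1$, and converting the $\|\cdot\|_{x^k}^{*}$-norm to $\|\cdot\|_{x^{k+1}}^{*}$ via the same sandwich at $t=1$, careful bookkeeping should yield the claimed bound $\lambda_{k+1} \leq \lambda_k^2/(1 - 4\lambda_k + 2\lambda_k^2)$; the condition $\lambda_k < 1 - 1/\sqrt{2}$ is precisely what keeps the denominator positive. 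For the quadratic-rate part, I would set $\phi(t) := t/(1 - 4t + 2t^2)$ and observe that $\phi(t) \leq 1$ is equivalent to $2t^2 - 5t + 1 \geq 0$, whose smaller root is $\bar\sigma = (5 - \sqrt{17})/4$. Thus, if $\lambda_0 \leq \sigma \leq \bar\sigma$, induction on \eqref{eq:FPNM_estimate} gives $\lambda_{k+1} \leq \phi(\lambda_k)\lambda_k \leq \lambda_k$, so the sequence $\{\lambda_k\}$ is monotone non-increasing and confined to $[0,\sigma]$. Feeding this back into \eqref{eq:FPNM_estimate} produces the quadratic contraction $\lambda_{k+1} \leq (1 - 4\sigma + 2\sigma^2)^{-1}\lambda_k^2$; combined with Lemma \ref{le:unique_solution} (which locates the unique minimizer inside the Dikin ellipsoid around each iterate), summability of $\{\lambda_k\}$ promotes $\lambda_k \to 0$ to $x^k \to x^{\ast}$ at a quadratic rate.

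The main obstacle I anticipate is matching the exact denominator $1 - 4\lambda_k + 2\lambda_k^2$: a one-shot integration of the self-concordant Hessian bound typically produces the sharper $(1-\lambda_k)^2$, so reproducing the authors' stated coefficient requires a particular, slightly lossy splitting of the Hessian-variation term and of the dual-norm conversion between $x^k$ and $x^{k+1}$. Once that bookkeeping lands on the stated form, the identification of $\bar\sigma$ and the quadratic convergence are routine algebra.
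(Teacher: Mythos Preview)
Your proposal is correct and in fact takes a cleaner route than the paper's own argument. The paper does \emph{not} anchor at $x^{k+1}$; it expresses both $x^{k+1}$ and $d^{k+1}$ through the operator $P_g^{x^k}$ based at the \emph{old} point. Because $d^{k+1}$ is defined via the Hessian at $x^{k+1}$, forcing it through $P_g^{x^k}$ introduces a correction term $e_{k+1}=[\nabla^2 f(x^k)-\nabla^2 f(x^{k+1})]d^{k+1}$. Nonexpansiveness then yields an \emph{implicit} bound of the form
\[
\lambda_k^1:=\|d^{k+1}\|_{x^k}\ \le\ \frac{\lambda_k^2}{1-\lambda_k}\ +\ \frac{2\lambda_k-\lambda_k^2}{(1-\lambda_k)^2}\,\lambda_k^1,
\]
which is solved for $\lambda_k^1$ and only afterwards converted to $\lambda_{k+1}$ via the norm change $x^k\to x^{k+1}$. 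That two-stage bookkeeping is exactly what produces the denominator $1-4\lambda_k+2\lambda_k^2$.

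Your anchoring trick with $\tilde S=\nabla^2 f(x^{k+1})x^{k+1}+v^k$ bypasses the correction term entirely: nonexpansiveness of $P_g^{x^{k+1}}$ lands you directly on the Newton residual measured in $\|\cdot\|_{x^{k+1}}^{*}$, and the standard self-concordant estimate plus one norm conversion gives $\lambda_{k+1}\le \lambda_k^2/(1-\lambda_k)^2$. Since $(1-\lambda_k)^2-(1-4\lambda_k+2\lambda_k^2)=\lambda_k(2-\lambda_k)>0$ on the relevant range, your bound is strictly sharper and implies \eqref{eq:FPNM_estimate} immediately; you need not ``match'' the paper's denominator at all. The remainder of your plan (monotonicity of $\phi(t)=t/(1-4t+2t^2)$, the root $\bar\sigma=(5-\sqrt{17})/4$, and the ensuing quadratic contraction) is the same as the paper's and is fine.
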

The proof of Theorem \ref{th:quad_converg_FPNM} can be found in the supplementary document.

%% 3.2. Damped proximal Newton method.
\subsection{Damped proximal Newton scheme}\label{subsec:damped_newton_step}
We now establish  that, with an appropriate choice of the step-size $\alpha \in (0,1]$, the iterative sequence $\set{{\bf x}^k}_{k\geq 0}$
generated by the \textit{damped proximal Newton} scheme
\begin{equation}\label{eq:damped_PNM}
\mathbf{x}^{k+1} := \mathbf{x}^k + \alpha_k\mathbf{d}^k
\end{equation}
is a decreasing sequence, i.e., $F({\bf x}^{k+1}) \leq F({\bf x}^k) - \omega(\sigma)$
whenever $\lambda_k \geq \sigma$, where $\sigma > 0$ is fixed. 
First, we show the following property for the new iteration ${\bf x}^{k+1}$.

%% Lemma 3.3.
\begin{lemma}\label{le:decrease_lemma}
Suppose that ${\bf x}^{k+1}$ is a point generated by \eqref{eq:damped_PNM}. Then,
we have
\begin{equation}\label{eq:decrease_eq}
F({\bf x}^{k+1}) \leq F({\bf x}^k) - \left[\alpha_k\lambda_k^2 - \omega^{*}(\alpha_k\lambda_k)\right],
\end{equation}
provided that $\alpha_k\lambda_k < 1$.
\end{lemma}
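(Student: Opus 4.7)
The plan is to split $F = f + g$ and bound the changes in $f$ and $g$ separately across the damped step $\mathbf{x}^{k+1} = \mathbf{x}^k + \alpha_k \mathbf{d}^k$, then combine them using the optimality condition \eqref{eq:cvx_subprob_optimality} of the subproblem $\mathcal{Q}(\mathbf{x}^k)$.

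For the smooth part, I would apply the self-concordant upper bound \eqref{eq:SC_bound2} with $\mathbf{y} = \mathbf{x}^{k+1}$ and $\mathbf{x} = \mathbf{x}^k$. Note $\|\mathbf{x}^{k+1} - \mathbf{x}^k\|_{\mathbf{x}^k} = \alpha_k \|\mathbf{d}^k\|_{\mathbf{x}^k} = \alpha_k \lambda_k$, so the hypothesis $\alpha_k \lambda_k < 1$ is exactly what is needed to legitimately invoke \eqref{eq:SC_bound2}. This yields
\[
f(\mathbf{x}^{k+1}) \leq f(\mathbf{x}^k) + \alpha_k \nabla f(\mathbf{x}^k)^T \mathbf{d}^k + \omega_{\ast}(\alpha_k \lambda_k).
\]

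For the nonsmooth part, since $g$ is convex and $\mathbf{x}^{k+1} = (1-\alpha_k)\mathbf{x}^k + \alpha_k(\mathbf{x}^k + \mathbf{d}^k)$ with $\alpha_k \in (0,1]$, Jensen's inequality gives
\[
g(\mathbf{x}^{k+1}) \leq (1-\alpha_k) g(\mathbf{x}^k) + \alpha_k\, g(\mathbf{x}^k + \mathbf{d}^k).
\]
Adding the two bounds produces
\[
F(\mathbf{x}^{k+1}) - F(\mathbf{x}^k) \leq \alpha_k\bigl[\nabla f(\mathbf{x}^k)^T \mathbf{d}^k + g(\mathbf{x}^k + \mathbf{d}^k) - g(\mathbf{x}^k)\bigr] + \omega_{\ast}(\alpha_k \lambda_k).
\]

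The remaining step — and the only place the optimization content enters — is to show that the bracketed quantity is at most $-\lambda_k^2$. By the optimality condition \eqref{eq:cvx_subprob_optimality}, the vector $\mathbf{v}^k := -\nabla f(\mathbf{x}^k) - \nabla^2 f(\mathbf{x}^k) \mathbf{d}^k$ belongs to $\partial g(\mathbf{x}^k + \mathbf{d}^k)$. Applying the subgradient inequality for $g$ at $\mathbf{x}^k + \mathbf{d}^k$ evaluated at $\mathbf{x}^k$ gives $g(\mathbf{x}^k) \geq g(\mathbf{x}^k + \mathbf{d}^k) - (\mathbf{v}^k)^T \mathbf{d}^k$, i.e.\
\[
g(\mathbf{x}^k + \mathbf{d}^k) - g(\mathbf{x}^k) \leq -\nabla f(\mathbf{x}^k)^T \mathbf{d}^k - (\mathbf{d}^k)^T \nabla^2 f(\mathbf{x}^k) \mathbf{d}^k.
\]
Substituting this and recognizing $(\mathbf{d}^k)^T \nabla^2 f(\mathbf{x}^k) \mathbf{d}^k = \|\mathbf{d}^k\|_{\mathbf{x}^k}^2 = \lambda_k^2$ collapses the bracket to exactly $-\lambda_k^2$, whence
\[
F(\mathbf{x}^{k+1}) - F(\mathbf{x}^k) \leq -\alpha_k \lambda_k^2 + \omega_{\ast}(\alpha_k \lambda_k),
\]
which is \eqref{eq:decrease_eq}.

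The only real obstacle is handling the nonsmooth $g$ cleanly at the intermediate point $\mathbf{x}^k + \alpha_k \mathbf{d}^k$, where no Taylor-like expansion is available. The trick of using convexity of $g$ to pass from $g(\mathbf{x}^k + \alpha_k \mathbf{d}^k)$ to an affine combination of $g(\mathbf{x}^k)$ and $g(\mathbf{x}^k + \mathbf{d}^k)$ is what decouples the step-size $\alpha_k$ from the subproblem's optimality condition, which lives at $\mathbf{x}^k + \mathbf{d}^k$. Everything else — the self-concordant majorization and the subgradient inequality — is routine once this decoupling is in place.
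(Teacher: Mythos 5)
Your proof is correct and follows essentially the same route as the paper: the self-concordant upper bound \eqref{eq:SC_bound2} on the smooth part, convexity of $g$ to interpolate between $\mathbf{x}^k$ and $\mathbf{x}^k+\mathbf{d}^k$, and the subgradient $\mathbf{v}^k=-\nabla f(\mathbf{x}^k)-\nabla^2 f(\mathbf{x}^k)\mathbf{d}^k\in\partial g(\mathbf{x}^k+\mathbf{d}^k)$ from \eqref{eq:cvx_subprob_optimality} to collapse the bracket to $-\lambda_k^2$. The only cosmetic difference is that you bound $f$ and $g$ separately before combining, whereas the paper writes the bound directly for $F$; the content is identical.
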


%% Proof of Lemma 2.
\begin{proof}
Let ${\bf y}^k = {\bf x}^k + {\bf d}^k$, where $\mathbf{d}^k$ is the unique solution of \ref{eq:cvx_subprob1}. 
It follows from the optimality condition of \eqref{eq:cvx_subprob_optimality} that there exists ${\bf v}_k \in\partial{g}({\bf y}^k)$ such that
\begin{equation}\label{eq:lm2_est0}
{\bf v}_k = -\nabla{f}({\bf x}^k) - \nabla^2f({\bf x}^k)({\bf y}^k - {\bf x}^k).
\end{equation}
Since $f$ is self-concordant, by \eqref{eq:SC_bound2}, for any ${\bf x}^{k+1}$ such that $\lambda_k = \norm{{\bf x}^{k+1} - {\bf x}^k}_{{\bf x}^k} < 1$ we have
\begin{align}\label{eq:lm2_est1}
F({\bf x}^{k+1}) &\leq F({\bf x}^k) + \nabla{f}({\bf x}^k)^T({\bf x}^{k+1}\!\! - {\bf x}^k) \\
& + \omega^{*}(\norm{{\bf x}^{k+1}\!\! -{\bf x}^k}_{{\bf x}^k}) + g({\bf x}^{k+1}) - g({\bf x}^k). \nonumber 
\end{align}
Since $g$ is convex, $\alpha\in [0,1]$, by using \eqref{eq:lm2_est0} we have
\begin{align}\label{eq:lm2_est2}
g({\bf x}^{k+1}) - g({\bf x}^k) &= g((1-\alpha_k){\bf x}^k + \alpha_k{\bf y}^k) - g({\bf x}^k) \nonumber\\
&\leq \alpha_k[g({\bf y}^k) \!-\! g({\bf x}^k)] \nonumber\\
& \leq \alpha {\bf v}_k^T({\bf y}^k \!-\! {\bf x}^k) \\
& = \alpha_k{\bf v}_k^T{\bf d}^k \nonumber\\
&\overset{\tiny\eqref{eq:lm2_est0}}{=} -\alpha_k\nabla{f}({\bf x}^k)^T{\bf d}^k - \alpha_k\norm{{\bf d}^k}_{\bar{\bf x}^k}^2.\nonumber
\end{align}
Now, substituting \eqref{eq:lm2_est2} into \eqref{eq:lm2_est1} and noting that ${\bf x}^{k+1} - {\bf x}^k = \alpha_k{\bf d}^k$ we obtain the following result
\begin{align}\label{eq:lm2_est3}
F({\bf x}^{k\!+\!1}) &\leq F({\bf x}^k) \!+\! \omega^{*}(\alpha\norm{{\bf d}^k}_{{\bf x}^k}) \!-\! \alpha_k({\bf d}^k)^T\nabla^2f({\bf x}^k){\bf d}^k\nonumber\\
&= F({\bf x}^k) - \left[\alpha_k\lambda_k^2 - \omega^{*}(\alpha_k\lambda_k)\right],
\end{align}
which is indeed \eqref{eq:decrease_eq}, provided that $\alpha_k\lambda_k < 1$.
\end{proof}
%% End of the proof.

% Let us choose $\bar{\bf x} := {\bf x}$, i.e., we use the exact Hessian of $f$ at the current iteration.
The following theorem provides an explicit formula for the step size $\alpha_k$.

%% Theorem 3.4.
\begin{theorem}\label{th:choose_alpha}
Let $\mathbf{x}^{k+1}$ be a new point generated by the scheme \eqref{eq:damped_PNM} and $\lambda_k$ be defined by
\eqref{eq:PNT_decrement}. Then, if we choose $\alpha_k := (1 + \lambda_k)^{-1} \in (0, 1]$ then
\begin{equation}\label{eq:decrease_eq2}
F({\bf x}^{k+1}) \leq F({\bf x}^k) - \omega(\lambda_k).
\end{equation}
Moreover, the step $\alpha_k = (1 + \lambda_k)^{-1}$ is optimal.
\end{theorem}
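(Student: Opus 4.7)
The plan is to simply optimize the decrease bound from Lemma \ref{le:decrease_lemma} over $\alpha$. The lemma tells us
\[ F(\mathbf{x}^{k+1}) \leq F(\mathbf{x}^k) - \psi(\alpha_k), \qquad \psi(\alpha) := \alpha\lambda_k^2 - \omega_{*}(\alpha\lambda_k), \]
whenever $\alpha_k\lambda_k < 1$. Since the lemma is valid for any admissible step, the sharpest guaranteed decrease is obtained by maximizing $\psi$ over $\alpha \in (0,1]$ with $\alpha\lambda_k < 1$, and the resulting $\alpha_k^{\ast}$ is by construction the optimal step in this framework.

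First I would compute the critical point. Using $\omega_{*}'(t) = t/(1-t)$ for $t\in[0,1)$, setting $\psi'(\alpha) = 0$ gives
\[ \lambda_k^2 - \frac{\alpha\lambda_k^2}{1 - \alpha\lambda_k} = 0 \quad\Longleftrightarrow\quad 1 - \alpha\lambda_k = \alpha \quad\Longleftrightarrow\quad \alpha = \frac{1}{1+\lambda_k}. \]
Since $\psi$ is concave in $\alpha$ (because $\omega_{*}$ is convex), this critical point is the unique maximizer. Note $\alpha_k = (1+\lambda_k)^{-1}\in(0,1]$ for all $\lambda_k\geq 0$, and $\alpha_k\lambda_k = \lambda_k/(1+\lambda_k) < 1$, so Lemma \ref{le:decrease_lemma} applies without further restriction.

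Next I would plug $\alpha_k = (1+\lambda_k)^{-1}$ into $\psi$. With $t := \lambda_k/(1+\lambda_k)$ one has $1-t = (1+\lambda_k)^{-1}$, hence $\omega_{*}(t) = -\lambda_k/(1+\lambda_k) + \ln(1+\lambda_k)$. A short algebraic simplification then gives
\[ \psi(\alpha_k) = \frac{\lambda_k^2}{1+\lambda_k} + \frac{\lambda_k}{1+\lambda_k} - \ln(1+\lambda_k) = \lambda_k - \ln(1+\lambda_k) = \omega(\lambda_k), \]
which is the claimed bound \eqref{eq:decrease_eq2}. Optimality of the step is immediate from the first paragraph: any other choice of $\alpha$ yields $\psi(\alpha) < \omega(\lambda_k)$, so the guaranteed decrease cannot be improved within the framework of Lemma \ref{le:decrease_lemma}.

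There is no real obstacle here: once Lemma \ref{le:decrease_lemma} is in hand, the result is a one-variable calculus exercise whose only subtlety is recognizing that the maximum value of $\psi$ coincides exactly with $\omega(\lambda_k)$ via the duality between $\omega$ and $\omega_{*}$. The only point worth double-checking is that the feasibility condition $\alpha_k\lambda_k < 1$ of Lemma \ref{le:decrease_lemma} is met, which it is automatically.
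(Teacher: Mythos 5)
Your proposal is correct and follows essentially the same route as the paper: both plug $\alpha_k=(1+\lambda_k)^{-1}$ into the decrease bound of Lemma \ref{le:decrease_lemma}, simplify via the identity $\frac{t^2}{1+t}-\omega_{*}\bigl(\frac{t}{1+t}\bigr)=\omega(t)$, and establish optimality by observing that this $\alpha_k$ maximizes the concave decrease function $\alpha\mapsto\alpha\lambda_k^2-\omega_{*}(\alpha\lambda_k)$. The only cosmetic difference is that you derive the step size from the first-order condition before verifying the bound, whereas the paper verifies the bound first and notes the maximization afterward.
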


%% Proof of Lemma 3.
\begin{proof}
By the choice of $\alpha_k$, we have $\alpha_k\lambda_k  =  (1+\lambda_k)^{-1}\lambda_k  < 1$. By using the estimate \eqref{eq:decrease_eq} we have
\begin{equation*}%\label{eq:lm3_est1}
F({\bf x}^{k+1}) \leq F({\bf x}^k) - (1+\lambda_k)^{-1}\lambda_k^2 + \omega^{*}\left((1+\lambda_k)^{-1}\lambda_k\right). 
\end{equation*}
Since $\frac{t^2}{1+t} - \omega^{*}(\frac{t}{1+t}) = \omega(t)$ for any $t > 0$, the last inequality implies \eqref{eq:decrease_eq2}.
Finally, we note that the function $\varphi(\alpha) := \alpha\lambda(1+\lambda) + \ln(1-\alpha\lambda)$ is maximized at $\alpha_k = (1+\lambda_k)^{-1}$, showing that $\alpha_k$ is optimal.
\end{proof}
%% End of the proof.

Theorem \ref{th:choose_alpha} shows that the damped proximal Newton scheme generates a new point ${\bf x}^{k+1}$ that decreases
$F$ of \eqref{eq:min_Fx} at least $\omega(\sigma)$ at each iteration, whenever $\lambda_k \geq \sigma$. 

%% Quadratic convergence of the damped Newton method.
\paragraph{Quadratic convergence:}
Similar to the full-step proximal-Newton scheme \eqref{eq:FPNM}, we can also show the quadratic convergence of the damped proximal-Newton scheme
\eqref{eq:damped_PNM}. This statement is summarized in the following theorem.

%% Theorem 3.2.
\begin{theorem}\label{th:quad_converg_DPNM}
For a given $\mathbf{x}^k \in\dom{F}$, let $\mathbf{x}^{k+1}$ be a new point generated by the scheme \eqref{eq:damped_PNM}  with $\alpha_k :=
(1+\lambda_k)^{-1}$.
Then, if $\lambda_k < 1-\frac{1}{\sqrt{2}}$, it holds that 
\begin{equation}\label{eq:DPNM_estimate}
\lambda_{k+1} \leq 2(1 - 2\lambda_k - \lambda_k^2)^{-1}\lambda^2_k.
\end{equation}
Hence, the sequence $\set{{\bf x}^k}_{k\geq 0}$ generated by \eqref{eq:damped_PNM} with $\alpha_k = (1+\lambda_k)^{-1}$ starting from $\mathbf{x}^0\in\dom{F}$
such that $\lambda_0 \leq \sigma \leq \bar{\sigma} := \sqrt{5} - 2 \approx 0.236068$ locally converges to ${\bf x}^{\ast}$,
the unique solution of \eqref{eq:min_Fx} at a quadratic rate.
\end{theorem}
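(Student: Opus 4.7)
The plan is to adapt the argument used for the full-step scheme in Theorem~\ref{th:quad_converg_FPNM}, carrying along one extra correction term that arises because the damped iterate $\mathbf{x}^{k+1} = \mathbf{x}^k + \alpha_k\mathbf{d}^k$ does not coincide with the proximal-Newton target $\mathbf{x}^k + \mathbf{d}^k$ used in the subgradient inclusion. First, I would record the two optimality conditions~\eqref{eq:cvx_subprob_optimality} at iterations $k$ and $k+1$, producing subgradients
$\mathbf{v}^{j} := -\nabla f(\mathbf{x}^j) - \nabla^2 f(\mathbf{x}^j)\mathbf{d}^j \in \partial g(\mathbf{x}^j+\mathbf{d}^j)$ for $j\in\{k,k+1\}$, and apply monotonicity of $\partial g$ between these two points. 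Since $\mathbf{x}^{k+1} - \mathbf{x}^k = \alpha_k\mathbf{d}^k$, the displacement that shows up is $\mathbf{d}^{k+1} - (1-\alpha_k)\mathbf{d}^k$, so monotonicity yields
\[
\langle \mathbf{v}^{k+1} - \mathbf{v}^k,\; \mathbf{d}^{k+1} - (1-\alpha_k)\mathbf{d}^k\rangle \geq 0.
\]
The unfamiliar $(1-\alpha_k)\mathbf{d}^k$ term is absent in the FPN analysis and is the source of both the extra factor $2$ and the $-\lambda_k^2$ summand in the denominator of~\eqref{eq:DPNM_estimate}.

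Next, I would substitute the explicit forms of $\mathbf{v}^k$ and $\mathbf{v}^{k+1}$, isolate $\lambda_{k+1}^2 = \|\mathbf{d}^{k+1}\|^2_{\mathbf{x}^{k+1}}$ on one side, and control the remaining terms via the standard self-concordance estimates of~\cite{Nesterov2004}: the Newton-remainder bound
$\|\nabla f(\mathbf{x}^{k+1}) - \nabla f(\mathbf{x}^k) - \nabla^2 f(\mathbf{x}^k)(\mathbf{x}^{k+1}-\mathbf{x}^k)\|^{*}_{\mathbf{x}^k} \leq r^2/(1-r)$ and the Dikin-ellipsoid Hessian comparison $(1-r)^2\nabla^2 f(\mathbf{x}^k) \preceq \nabla^2 f(\mathbf{x}^{k+1}) \preceq (1-r)^{-2}\nabla^2 f(\mathbf{x}^k)$, where
$r := \|\mathbf{x}^{k+1}-\mathbf{x}^k\|_{\mathbf{x}^k} = \alpha_k\lambda_k = \lambda_k/(1+\lambda_k) < 1$. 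The choice $\alpha_k = (1+\lambda_k)^{-1}$ is calibrated so that $1-r = (1+\lambda_k)^{-1}$ and $r/(1-r) = \lambda_k$, which is what ultimately collapses the bookkeeping to the rational form~\eqref{eq:DPNM_estimate}. Applying Cauchy--Schwarz in the $\|\cdot\|_{\mathbf{x}^{k+1}}$-norm converts the monotonicity inequality into $\lambda_{k+1}^2 \leq C(\lambda_k)\,\lambda_{k+1}$, and dividing by $\lambda_{k+1}$ gives the one-step recursion. Along the way one must also use the nonexpansiveness property~\eqref{eq:P_g_property2} (or equivalently the strong monotonicity of $\nabla^2 f(\mathbf{x}^{k+1})$) to convert the cross term between $\mathbf{d}^k$ and $\mathbf{d}^{k+1}$ into a bound in the $\mathbf{x}^{k+1}$-norm.

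For the global statement, the remaining work is to check that the recursion~\eqref{eq:DPNM_estimate} is a strict contraction on $\lambda_k$: solving $2\lambda^2/(1-2\lambda-\lambda^2) < \lambda$ gives the threshold $\bar{\sigma} = \sqrt{5}-2$ as the positive root of $\lambda^2+4\lambda-1=0$, so once $\lambda_0 \leq \bar{\sigma}$ the sequence $\{\lambda_k\}$ decreases monotonically and quadratically to zero. Convergence of $\{\mathbf{x}^k\}$ to $\mathbf{x}^{*}$ then follows from Lemma~\ref{le:fixed_point} together with the descent guarantee of Theorem~\ref{th:choose_alpha}. The main obstacle I anticipate is the careful accounting for the $(1-\alpha_k)\mathbf{d}^k$ cross-term, which forces comparisons between $\|\mathbf{d}^k\|$ and $\|\mathbf{d}^{k+1}\|$ measured in two distinct local norms; splitting the resulting residual into a Newton-remainder part and a Hessian-variation part via the triangle inequality is what produces the leading constant $2$ in~\eqref{eq:DPNM_estimate} (as opposed to $1$ in~\eqref{eq:FPNM_estimate}), and getting the denominator $1-2\lambda_k-\lambda_k^2$ exactly requires using the identity $1-r = (1+\lambda_k)^{-1}$ rather than any looser estimate.
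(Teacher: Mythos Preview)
Your proposal is correct and runs parallel to the paper's argument, with one presentational difference worth noting. The paper's proof (in the supplement) is terse: it invokes the structure of the FPNM proof of Theorem~\ref{th:quad_converg_FPNM} verbatim, using the nonexpansiveness~\eqref{eq:P_g_property2} of $P_k^g$ in the $\|\cdot\|_{\mathbf{x}^k}$ norm to bound $\|\mathbf{d}^{k+1}\|_{\mathbf{x}^k}$, obtaining the intermediate estimate
\[
\|\mathbf{d}^{k+1}\|_{\mathbf{x}^k}\;\le\;\frac{2\lambda_k^2}{(1+\lambda_k)(1-2\lambda_k-\lambda_k^2)},
\]
and only afterwards transferring to the $\|\cdot\|_{\mathbf{x}^{k+1}}$ norm via $(1-\alpha_k\lambda_k)^{-1}=1+\lambda_k$. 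Your route---monotonicity of $\partial g$ plus Cauchy--Schwarz directly in the $\|\cdot\|_{\mathbf{x}^{k+1}}$ norm---is the dual formulation of the same step (resolvent nonexpansiveness and operator monotonicity are equivalent), and your identification of the extra displacement $(1-\alpha_k)\mathbf{d}^k$ is exactly the new ingredient the paper's ``similar approach'' must absorb.

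One small correction to your narrative: the leading factor $2$ in~\eqref{eq:DPNM_estimate} does \emph{not} come from the split into ``Newton-remainder part and Hessian-variation part.'' The Hessian-variation term (the analogue of $[\cdot]_{[2]}$ in the FPNM proof) is what produces the denominator $1-2\lambda_k-\lambda_k^2$, since $\tfrac{2r-r^2}{(1-r)^2}=2\lambda_k+\lambda_k^2$ when $r=\lambda_k/(1+\lambda_k)$. The numerator $2\lambda_k^2$ arises because the damping correction $(1-\alpha_k)\|\mathbf{d}^k\|_{\mathbf{x}^k}=\lambda_k^2/(1+\lambda_k)$ and the Newton remainder $r^2/(1-r)=\lambda_k^2/(1+\lambda_k)$ happen to coincide and add. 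This does not affect the validity of your plan, only the attribution of where the constant comes from.
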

The proof of Theorem \ref{th:quad_converg_DPNM} can be found in the supplementary document.
Note that the value $\bar{\sigma}$ in Theorem \ref{th:quad_converg_DPNM} is larger than in Theorem \ref{th:quad_converg_FPNM}. However, both values are not
tight.

%% 3.3. The algorithm.
\subsection{The algorithm pseudocode}\label{subsec:PNM_alg}
As proved by Theorems \ref{th:choose_alpha} and \ref{th:quad_converg_DPNM}, we can use the damped proximal-Newton scheme to build the algorithm. 
Now, we present a two-phase proximal-Newton algorithm. We first select a constant $\sigma \in (0, \bar{\sigma}]$. At each
iteration, we compute the new point ${\bf x}^{k+1}$ by using the damped proximal Newton scheme \eqref{eq:damped_PNM} until we get $\lambda_k \leq
\sigma$. Then,  we switch to the full-step Newton scheme and perform it until the convergence is achieved. 
These steps are described in Algorithm \ref{alg:PNM}.

%%% Algorithm 1.
% \vskip-0.2cm
\begin{algorithm}[!ht]\caption{(\textit{Proximal Newton algorithm})}\label{alg:PNM}
\begin{algorithmic}
   \STATE {\bfseries Initialization:} 
   \STATE Require a starting point ${\bf x}^0\in\dom{F}$ and a constant $\sigma \in (0, \bar{\sigma}]$, 
          where $\bar{\sigma} := \frac{(5-\sqrt{17})}{4}\approx 0.219224$. 
   \STATE\textbf{Phase 1:}~(\textit{Damped proximal Newton iterations}).
   \FOR{$j=0$ {\bfseries to} $j_{\max}$}
   \STATE 1. Compute the proximal-Newton search direction ${\bf d}^j$ as the unique solution of $\mathcal{Q}(\mathbf{x}^j)$.
   \STATE 2. Compute $\lambda_j := \norm{{\bf d}^j}_{{\bf x}^j}$.
   \STATE 3. \textbf{if}~$\lambda_j \leq \sigma$ \textbf{then} terminate Phase 1.
   \STATE 4. Otherwise, update the next iteration ${\bf x}^{j+1} := {\bf x}^j + \alpha_j{\bf d}^j$, where $\alpha_j := (1+\lambda_j)^{-1}$.
   \ENDFOR
   \STATE \textbf{Phase 2:}~(\textit{Full-step proximal Newton iterations}).
   \STATE Set ${\bf x}^0 := {\bf x}^j$ from Phase 1 and choose a desired accuracy $\varepsilon > 0$.
   \FOR{$k=0$ {\bfseries to} $k_{\max}$}
   \STATE 1. Compute the proximal-Newton direction $\mathbf{d}^{k}$ as the unique solution of \ref{eq:cvx_subprob1}.
   \STATE 2. Compute $\lambda_k := \norm{{\bf d}^k}_{{\bf x}^k}$.
   \STATE 3. \textbf{if}~$\lambda_k \leq \varepsilon$ \textbf{then} terminate Phase 2.
   \STATE 4. Otherwise, update  ${\bf x}^{k+1} := {\bf x}^k + {\bf d}^k$.
   \ENDFOR
\end{algorithmic}
\end{algorithm}

Note that the radius $\sigma$ of the quadratic convergence region in Algorithm \ref{alg:PNM} can be fixed at its upper bound $\bar{\sigma}$. The maximum number
of iterations $j_{\max}$ and $k_{\max}$ can also be specified, if necessary.

%%% 3.4. Iteration-complexity analysis.
\subsection{Iteration-complexity analysis}\label{subsec:complexity}
We analyze the complexity of Algorithm \ref{alg:PNM} by separating Phase 1 and Phase 2. This analysis is summarized in the following theorem.

%% Theorem 3.5.
\begin{theorem}\label{th:complexity}
The maximum number of iterations required in Phase 1 does not exceed $j_{\max} :=
\left\lfloor\frac{F({\bf x}^0)-F({\bf x}^{\ast})}{\omega(\sigma)}\right\rfloor + 1$, where ${\bf x}^{\ast}$ is the unique solution of \eqref{eq:min_Fx}.
The maximum number of iterations required in Phase 2 to obtain $\lambda_k \leq \varepsilon$ does not exceed $k_{\max} :=
O\left(\ln\ln\left(\frac{c}{\varepsilon}\right)\right)$, where $c := (1 - 4\sigma + 2\sigma^2)^{-1} > 0$.
\end{theorem}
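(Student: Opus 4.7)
The plan is to treat Phase~1 and Phase~2 separately, each by plugging directly into a decrement estimate already proved in the paper.

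For Phase~1, the plan is to convert Theorem~\ref{th:choose_alpha} into a uniform per-iteration progress bound. Since Phase~1 runs only while $\lambda_j \geq \sigma$ and $\omega$ is strictly increasing on $[0,\infty)$, Theorem~\ref{th:choose_alpha} gives
\[
F({\bf x}^{j+1}) \leq F({\bf x}^j) - \omega(\lambda_j) \leq F({\bf x}^j) - \omega(\sigma).
\]
Telescoping over $j$ iterations and bounding from below by the unique minimum $F({\bf x}^{\ast})$ (which exists by Lemma~\ref{le:unique_solution}) yields $j\,\omega(\sigma)\leq F({\bf x}^0)-F({\bf x}^{\ast})$, and the stated bound on $j_{\max}$ follows after accounting for the terminating iteration. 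I expect this part to be entirely routine.

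For Phase~2, the starting point is $\lambda_0 \leq \sigma \leq \bar\sigma$, and the tool is the quadratic estimate $\lambda_{k+1}\leq \lambda_k^2/(1-4\lambda_k+2\lambda_k^2)$ from Theorem~\ref{th:quad_converg_FPNM}. The plan has three short steps. First, I would show by induction that $\lambda_k \leq \sigma$ for every $k$: because $t\mapsto 1-4t+2t^2$ is decreasing on $[0,1]$, the inductive step reduces to $c\sigma^2 \leq \sigma$, equivalently $2\sigma^2-5\sigma+1\geq 0$, which holds exactly for $\sigma\leq\bar\sigma=(5-\sqrt{17})/4$. Second, combining this invariance with the same monotonicity upgrades the recursion to the uniform contraction $\lambda_{k+1}\leq c\,\lambda_k^2$. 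Third, substituting $u_k:=c\lambda_k$ turns this into $u_{k+1}\leq u_k^2$, hence $u_k\leq u_0^{2^k}$; the condition $\lambda_k \leq \varepsilon$ is then guaranteed once $u_0^{2^k}\leq c\varepsilon$, i.e.\ $2^k \geq \ln(1/(c\varepsilon))/\ln(1/u_0)$, giving the doubly logarithmic bound $k_{\max}=O(\ln\ln(c/\varepsilon))$ since $u_0=c\lambda_0<1$.

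The step I expect to be the main obstacle is the invariance argument just mentioned, because it is what lets us replace the $k$-dependent constant $1/(1-4\lambda_k+2\lambda_k^2)$ by the single constant $c$ uniformly throughout the iteration. Fortunately, this reduces to a quadratic inequality in $\sigma$ that is calibrated precisely to the definition of $\bar\sigma$, so it should go through cleanly. Once that is in place, the rest of Phase~2 is a standard doubly exponential telescoping and Phase~1 is immediate from Theorem~\ref{th:choose_alpha}.
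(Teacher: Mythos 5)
Your proposal is correct and follows essentially the same route as the paper: Phase~1 is the identical telescoping of the uniform decrease $\omega(\sigma)$ from Theorem~\ref{th:choose_alpha}, and Phase~2 unrolls $\lambda_{k+1}\leq c\lambda_k^2$ to $\lambda_k \leq c^{-1}(c\lambda_0)^{2^k}$, which is exactly the paper's bound $\lambda_k \leq c^{2^k-1}\lambda_0^{2^k}$. The invariance $\lambda_k\leq\sigma$ that you re-derive via $2\sigma^2-5\sigma+1\geq 0$ is already contained in (the proof of) Theorem~\ref{th:quad_converg_FPNM}, which the paper simply invokes, so no new idea is needed there.
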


%% Proof of Theorem 3.5.
\begin{proof}
Since $\lambda_j \geq \sigma$ for all $j\geq 0$ in Phase 1, it follows from Theorem \ref{th:choose_alpha} that $F({\bf x}^{j+1}) \leq F({\bf x}^j) -
\omega(\sigma)$.
By induction we have $F({\bf x}^{\ast}) \leq F({\bf x}^{j_{\max}}) \leq F({\bf x}^0) - j_{\max}\omega(\sigma)$. This implies that $j_{\max} \leq [F({\bf x}^0) -
F({\bf x}^{\ast})]/\omega(\sigma)$. Hence, we can fix 
\begin{equation*}
j_{\max} := \left\lfloor\frac{F({\bf x}^0)-F({\bf x}^{\ast})}{\omega(\sigma)}\right\rfloor + 1.
\end{equation*}
Let $c := (1 - 4\sigma + 2\sigma^2)^{-1} > 0$. 
By induction, it follows from Theorem \ref{th:quad_converg_FPNM} that we have $\lambda_k \leq
\left(c\right)^{2^k-1}\lambda_0^{2^k} \leq \left(c\right)^{2^k-1}\sigma^{2^k}$. In order to ensure
$\lambda_k \leq\varepsilon$, we require $\left(c\right)^{2^k-1}\sigma^{2^k} \leq \varepsilon$, which leads to $k \leq
O\left(\ln\ln(c/\varepsilon)\right)$. Hence, we can show that $k_{\max} := O\left(\ln\ln(c/\varepsilon)\right)$.
\end{proof}
%% End of the proof.

We note that we do not use $j_{\max}$ as a stopping criterion of Phase 1 of Algorithm \ref{alg:PNM}. 
In practice, we only need an upper bound of this quantity. 
%to provide a safeguard for the number of iterations. 
If we fix $\sigma$ at $\bar{\sigma}$ then
$c\approx 4.561553$ and the complexity of Phase 2 becomes $O\left(\ln\ln\left(\frac{4.5}{\varepsilon}\right)\right)$.
%  and the objective functions $F$ decreases % at least $0.021009$ at each iteration in Phase I. 

%%% 3.4. Dual approach for the solution of the convex subproblem
\subsection{Dual solution approach of the subproblem}\label{subsec:solve_subprob}
In this subsection we consider a specific instance of $g$: $g({\bf x}) := \rho\norm{{\bf x}}_1$.
First, we derive a dual formulation of the convex subproblem \ref{eq:cvx_subprob1}. For notational convenience, we let ${\bf q}_k := \nabla{f}({\bf x}^k)$,
${\bf H}_k := \nabla^2{f}({\bf x}^k)$. Then, the convex subproblem \ref{eq:cvx_subprob1} can be written equivalently as
\begin{equation}\label{eq:cvx_subprob2}
\min_{{\bf y}\in\mathbb{R}^n}\set{\frac{1}{2}{\bf y}^T{\bf H_ky} + ({\bf q}_k - {\bf H}_k{\bf x}^k)^T{\bf y} + \rho\norm{{\bf y}}_1}.
\end{equation}
By using the min-max principle, we can write \eqref{eq:cvx_subprob2} as
\begin{equation}\label{eq:min_max_subprob}
\max_{\norm{{\bf u}}_{\infty}\!\leq 1}\min_{{\bf y}\in\mathbb{R}^n}\set{\frac{1}{2}{\bf y}^T{\bf H}_k{\bf y} \!+\! ({\bf q} \!-\! {\bf H}_k{\bf x}^k)^T{\bf
y} \!+\! \rho {\bf u}^T{\bf y}}. 
\end{equation}
Solving the inner minimization in \eqref{eq:min_max_subprob} we obtain:
\begin{equation}\label{eq:cvx_dual_subprob2}
\min_{\norm{{\bf u}}_{\infty}\leq 1}\set{\frac{1}{2}{\bf u}^T{\bf H}_k^{-1}{\bf u} + \tilde{{\bf q}}_k^T{\bf u}},
\end{equation}
where $\tilde{{\bf q}}_k := \frac{1}{\rho}({\bf H}_k^{-1}{\bf q}_k - {\bf x}^k)$.
Note that the objective function $\varphi_k({\bf u}) := \frac{1}{2}{\bf u}^T{\bf H}_k^{-1}{\bf u} + \tilde{{\bf q}}_k^T{\bf u}$ of \eqref{eq:cvx_dual_subprob2}
is strongly convex. One can apply the fast projected gradient methods with linear convergence rate in \cite{Nesterov2007,Beck2009} for solving this problem.

In order to recover the solution of the primal subproblem \ref{eq:cvx_subprob1}, we note that the solution of the parametric minimization problem in
\eqref{eq:min_max_subprob} is given by ${\bf y}^{*}_k({\bf u}) := {\bf x}^k - {\bf H}_k^{-1}({\bf q}^k + \rho {\bf u})$.
Let ${\bf u}^{*}_k$ be the optimal solution of \eqref{eq:cvx_dual_subprob2}. We can recover the primal proximal-Newton search direction ${\bf d}^k$
of the subproblem \ref{eq:cvx_subprob1} as
\begin{equation}\label{eq:primal_sol_of_cvx_subprob2}
{\bf d}^k = - \nabla^2f({\bf x}^k)^{-1}[\nabla{f}({\bf x}^k) + \rho {\bf u}^{*}_k].
\end{equation}
To compute the quantity $\lambda_k := \norm{{\bf d}^k}_{{\bf x}^k}$ in Algorithm \ref{alg:PNM}, we use \eqref{eq:primal_sol_of_cvx_subprob2} such that
\begin{equation}\label{eq:r_norm}
\lambda_k = \norm{\nabla{f}({\bf x}^k) + \rho {\bf u}^{*}_k}^{*}_{{\bf x}^k}. 
\end{equation}
Note that computing $\lambda_k$ by \eqref{eq:r_norm} requires the inverse of the Hessian matrix $\nabla^2{f}({\bf x}^k)$.
%% 4. Application to graph selection.
\section{Application to graph selection}\label{sec:app_graph_select}
In this section, we customize the theory framework of Algorithm \ref{alg:PNM} by using only Phase 1 to solve the graph selection problem \eqref{eq:glearn_prob}.

%% 4.1. Quantification.
\noindent\textbf{Quantification:}
For clarity, we retain the matrix variable form as presented in \eqref{eq:glearn_prob}. We note that $f(\T)$ is a self-concordant convex function, while
$g(\T)$ is a proper, lower semicontinuous and nonsmooth convex function. 
Thus, our theory presented above can be applied to \eqref{eq:glearn_prob}. 
Given the current estimate $\T_i \succ 0$, we have $\nabla{f}(\T_i) = \widehat{\cov} - \T_i^{-1}$ and $\nabla^2{f}(\T_i) = \T_i^{-1}\kron\T_i^{-1}$.
Under this setting, the dual subproblem \eqref{eq:cvx_dual_subprob2} becomes:
\begin{equation}\label{eq:cvx_dual_subprob2_app}
{\bf U}^{\ast}_i = \argmin_{\norm{\vec({\bf U})}_{\infty}\leq 1}\set{\frac{1}{2}\trace((\T_i {\bf U})^2) + \trace(\widetilde{\bf Q}_i{\bf U})},
\end{equation}
where $\widetilde{\bf Q}_i := \rho^{-1}[\T_i\widehat{\cov}\T_i - 2\T_i]$. 
Given the dual solution ${\bf U}^{\ast}_i$ of \eqref{eq:cvx_dual_subprob2_app}, the primal proximal-Newton search direction (i.e. the solution of
\ref{eq:cvx_subprob1}) is computed as
\begin{equation}\label{eq:primal_sol_of_cvx_subprob2_app}
\boldsymbol{\Delta}_i := -\left((\T_i\widehat{\cov} - \mathbb{I})\T_i + \rho\T_i {\bf U}_i^{\ast}\T_i\right).  
\end{equation} 
The quantity $\lambda_i$ defined in \eqref{eq:r_norm} can be computed by
\begin{equation}\label{eq:r_norm_app}
\lambda_i  := \left(p - 2\cdot \trace\left({\bf W}_i\right) + \trace\left( {\bf W}_i^2\right)\right)^{1/2}.
\end{equation} where ${\bf W}_i := \T_i(\widehat{\cov} + \rho {\bf U}_i^{\ast})$. 

%% 4.2. The graph learning algorithm.
\noindent\textbf{The graph learning algorithm:}
Algorithm \ref{alg:PNA_for_graph_select} summarizes the proposed scheme for graph selection.

%% Algorithm 2.
\vskip-0.1cm
\begin{algorithm}[!ht]\caption{(\textit{Dual PN for graph selection (DPNGS)})}\label{alg:PNA_for_graph_select}
\begin{algorithmic}   
   \STATE\textbf{Input: } Matrix $\Sigma\succ 0$ and a given tolerance $\varepsilon > 0$.
   \STATE\textbf{Output: } An approximate solution $\T_i$ of \eqref{eq:glearn_prob}.
   \STATE {\bfseries Initialization:} Find a starting point $\T_0\succ 0$.
   \FOR{$i=0$ {\bfseries to} $i_{\max}$}
      \STATE 1. Set $\widetilde{\bf Q}_i := \rho^{-1}\left(\T_i\widehat{\cov}\T_i - 2\T_i\right)$.
      \STATE 2. Compute ${\bf U}^{\ast}_i$ in \eqref{eq:cvx_dual_subprob2_app}.
      \STATE 3. Compute $\lambda_i$ by \eqref{eq:r_norm_app}, where $\textbf{W}_i \!:=\! \T_i(\widehat{\cov} \!+\! \rho {\bf U}_i^{\ast})$. 
      \STATE 4. If $\lambda_i \leq \varepsilon$ terminate.
      \STATE 5. Compute $\small{\boldsymbol{\Delta}_i := -\left((\T_i\widehat{\cov} - \mathbb{I})\T_i + \rho\T_i{\bf U}^{\ast}_i\T_i\right)}$.
      \STATE 6. Set $\alpha_i := (1 + \lambda_i)^{-1}$.
      \STATE 7. Update $\T_{i+1} := \T_i + \alpha_i\boldsymbol{\Delta}_i$.
   \ENDFOR
\end{algorithmic}
\end{algorithm}
Overall, Algorithm \ref{alg:PNA_for_graph_select} does not require any matrix inversion operation. It only needs matrix-vector and matrix-matrix
calculations, making the parallelization of the code easier. 
We note that due to the predefined step-size selection $\alpha_i$ in Algorithm \ref{alg:PNM} we do not need to do any backtracking line-search step. This
advantage can avoid some overhead computation regarding the evaluation of the objective function which is usually expensive in this application.

%% 4.c. Arithmetical complexity analysis
\noindent\textbf{Arithmetical complexity analysis:}
Since the analytical complexity is provided in Theorem \ref{th:complexity}, we only analyze the arithmetical complexity of Algorithm \ref{alg:PNA_for_graph_select} here. As we work through the dual problem, the primal solution is dense even if majority of the entries are rather small (e.g., smaller than $10^{-6}$).\footnote{In our MATLAB code, we made no attempts for sparsification of the primal solution. The overall complexity of the algorithm can be improved via thresholding tricks.} Hence, the arithmetical complexity of Algorithm
\ref{alg:PNA_for_graph_select} is dominated by the complexity of $p\times p$ matrix multiplications.   

For instance, the computation of $\widetilde{\bf Q}_i$ and $\boldsymbol{\Delta}_i$ require basic matrix multiplications. 
For the computation of $\lambda_i$, we require two trace operations: $\tr({\bf W}_i)$ in $\mathcal{O}(p)$ time-complexity and $\tr({\bf W}_i^2)$ in
$\mathcal{O}(p^2)$ time-complexity. We note here that, while ${\bf W}_i$ is a {\it dense} matrix, the trace operation requires only the computation of the
diagonal elements of ${\bf W}_i^2$. 
Given $\T_i$, $\alpha_i$ and $\boldsymbol{\Delta}_i$, $\T_{i+1}$ requires $O(p^2)$ time-complexity. 

To compute \eqref{eq:cvx_dual_subprob2_app}, we can use the fast projected gradient method (FPGM) \cite{Nesterov2007,Beck2009} with step size $1/L$   where $L$
is the Lipschitz constant of the gradient of the objective function in \eqref{eq:cvx_dual_subprob2_app}. 
It is easy to observe that $L_i := \gamma_{\max}^2(\T_i)$ where $\gamma_{\max}(\T_i)$ is the largest eigenvalue of $\T_i$. For sparse $\T_i$, we can
approximately compute $\gamma_{\max}(\T_i)$ is $\mathcal{O}(p^2)$ by using \textit{iterative power methods} (typically, 10 iterations suffice). 
The projection onto $\norm{\text{\textbf{vec}}({\bf U})}_\infty \leq 1$ clips the elements by unity in $\mathcal{O}(p^2)$ time. Thus, 
the time overhead due to acceleration is within $\mathcal{O}(p^2)$. 

Given the above, FPGM requires a constant number of iterations $k_{\max}$, which is independent of the dimension $p$, to achieve an $\varepsilon_{\mathrm{in}}$
solution accuracy. 
Overall, the time-complexity for the solution in  \eqref{eq:cvx_dual_subprob2_app} is $\mathcal{O}(k_{\max}M)$,  where $M$ is the cost of matrix multiplication.

%% Parallel and distributed remark.
\begin{remark}[Parallel and distributed implementation ability]\label{re:par_dis_impl_ability}
In Algorithm \ref{alg:PNA_for_graph_select}, the outer loop does not require any Cholesky decomposition or matrix inversion. 
Suppose that the fast projected gradient method is applied to solve the dual subproblem \eqref{eq:cvx_dual_subprob2_app}. The main operation needed in the
whole algorithm is matrix-matrix multiplication of the form $\T_i\bf{U}\T_i$, where $\T_i$ and $\bf{U}$ are symmetric positive definite. This operation can
naturally be computed in a parallel or distributed manner. For more details, we refer the reader to Chapter 1 in \cite{Bertsekas1989}.
\end{remark}
%% 6. Numerical experiments.
\vspace{-3mm}
\section{Numerical experiments}\label{sec:num_test}
In this section we test DPNGS (Algorithm \ref{alg:PNA_for_graph_select}  in Section \ref{sec:app_graph_select}) and compare it with the state-of-the-art graph selection algorithm QUadratic Inverse
Covariance (QUIC) algorithm \cite{Hsieh2011} on a real world data set. The QUIC algorithm is also a Newton-based method, which in addition exploits the sparsity in solving its primal subproblems. We note that QUIC was implemented in C  while our codes in this work are implemented in MATLAB.

%% 6.1. Implementation details.
\noindent\textbf{Implementation details:}
We test DPNGS on MATLAB 2011b running on a PC Intel Xeon X5690 at 3.47GHz per core with 94Gb RAM. 
To solve \eqref{eq:cvx_dual_subprob2_app}, we use the FPGM scheme as detailed in the supplementary material.
We terminate FPGM if either $\norm{{\bf U}_{k+1}-{\bf U}_k}_{\mathrm{F}} \leq \varepsilon_{\mathrm{in}}\max\{\norm{{\bf U}_k}_{\mathrm{F}}, 1\}$ or the number of iterations
reaches $k_{\max}$ where $\varepsilon_{\mathrm{in}} > 0$ and $k_{\max}$ will be specified later.
The stopping criterion of the outer loop is $\lambda_i \leq 10^{-6}$ and the maximum number of outer iterations is chosen as $i_{\max} := 200$. 
We test the following three variants of DPNGS: DPNGS [$\varepsilon_{\mathrm{in}} = 10^{-6}$ and $k_{\max} = 1000$], DPNGS($5$)  [$\varepsilon_{\mathrm{in}} = 10^{-4}$ and $k_{\max} = 5$], and DPNGS($10$) [$\varepsilon_{\mathrm{in}} = 10^{-5}$ and $k_{\max} = 10$].
The DPNGS(5) and DPNGS(10) variants can be considered as inexact variants of DPNGS. 

% \paragraph{Data generation:} 
\noindent\textbf{Real-world data:}
In our experiments, we use the real biology data preprocessed by \cite{Li2010} to compare the performance of the DPNGS variants above and QUIC \cite{Hsieh2011} for $5$ problems: \texttt{Lymph} ($p=587$), \texttt{Estrogen}
($p=692$), \texttt{Arabidopsis} ($p=834$), \texttt{Leukemia} ($p=1225$) and \texttt{Hereditary} ($p=1869$).
This dataset can be found at \url{http://ima.umn.edu/~maxxa007/send_SICS/}.

% 6.1. Analysis the performance of Algorithm based on Lymph
\noindent\textbf{Convergence behaviour analysis:}
First, we verify the convergence behaviour of Algorithm \ref{alg:PNA_for_graph_select} by analyzing the quadratic convergence of the quantity $\lambda_i$,
where $\lambda_i$ is defined by \eqref{eq:r_norm_app}.  Our analysis is based on the \texttt{Lymph} problem with $p = 587$ variables.
We note that $\lambda_i$ reveals the weighted norm of the proximal-gradient mapping of the problem. 
The convergence behaviour is plotted in Figure \ref{fig:lambda_conv} for three different values of $\rho$, namely $\rho = 0.25$, $\rho = 0.1$, $\rho = 0.05$
and $\rho = 0.01$.
\begin{figure}[ht]
\vskip-0.05in
\begin{center}
\centerline{\includegraphics[width=8cm, height=4.4cm]{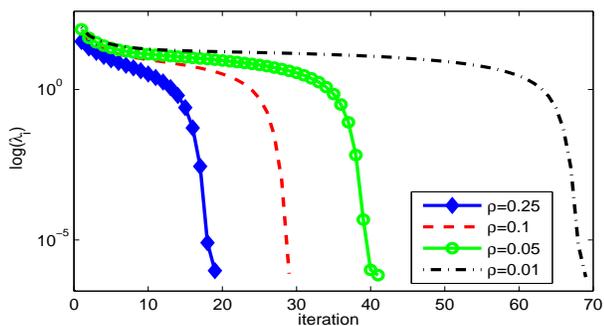}}
\caption{Quadratic convergence of DPNGS}\label{fig:lambda_conv}
\end{center}
\vskip -0.3in
\end{figure} 
Figure \ref{fig:lambda_conv} shows that whenever the values of $\lambda_i$ gets into the quadratic region, it converges with only a few iterations.
As $\rho$ becomes smaller, we need more iterations to get into the quadratic convergence region. 

Next, we illustrate the step-size $\alpha_i$ of DPNGS. Figure \ref{fig:step_conv} shows the increasing behaviour of the step size on the same dataset.
Since $\alpha_i = (1+\lambda_i)^{-1}$, it converges quickly at the last iterations.
\begin{figure}[ht]
\vskip -0.05in
\begin{center}
\centerline{\includegraphics[width=8cm, height=4.4cm]{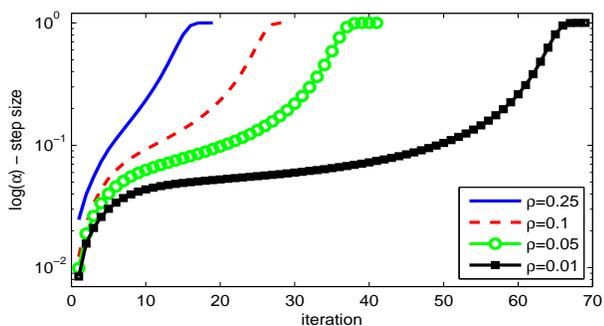}}
\caption{The step size of DPNGS}\label{fig:step_conv}
\end{center}
\vskip -0.3in
\end{figure} 
We also compare the objective value decrement of both algorithms in $y$-log-scale in Figure \ref{fig:obj_value}. 
Using the same tolerance level, we reach the objective value
$-4.141662\times10^2$ after $69$ iterations while QUIC needs $159$ iterations. 
Moreover, Figure 3 shows the quadratic convergence of our approach in contrast to QUIC; the latter requires many more iterations to slightly improve the objective. 
\begin{figure}[ht]
\vskip-0.05in
\begin{center}
\centerline{\includegraphics[width=8cm, height=4.5cm]{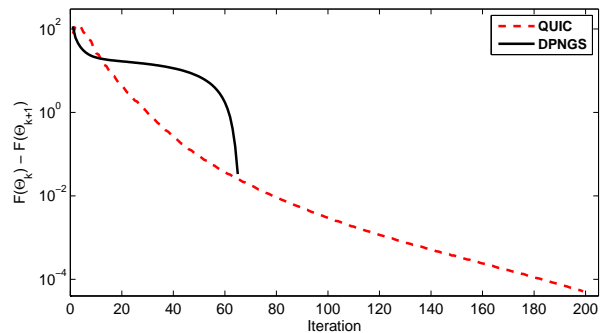}}
\caption{The difference of the objective values of DPNGS and QUIC in y-log-scale}\label{fig:obj_value}
\end{center}
\vskip -0.3in
\end{figure} 
Figure 4 is the histogram of the solution in $\log$ scale reported by DPNGS and QUIC. Due to the dual solution approach, DPNGS reports an approximate
solution with similar sparsity pattern as the one of QUIC. However, our solution has many small numbers instead of zero as in QUIC as revealed in Figure
\ref{fig:hist_sol}. \emph{This seems to be the main weakness of the dual approach}:  it obviates matrix inversions by avoiding the primal problem, which can return solutions with exact zeros thanks to its soft-thresholding prox-operator. 

As a result, DPNGS carries around extremely small coefficients (almost of them smaller than $5\times 10^{-5}$) often preventing it from achieving the same objective level as the numerical experiments on the full data set shows. At the same time, since the approach does not rely on coordinate descent on active sets, it appears much less sensitive to the choice of $\rho$. This could be an advantage of DPNGS in applications requiring smaller $\rho$ values. If exact sparsity is needed, then a single primal iteration suffices to remove the small coefficients.  

\begin{figure}[ht]
\vskip-0.05in
\begin{center}
\centerline{\includegraphics[width=8.4cm, height=3.3cm]{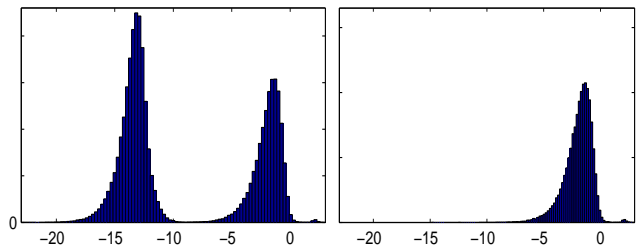}}
\caption{The histogram of the coefficient absolute values of the solution in $\log$-scale of DPNGS and QUIC (right).}\label{fig:hist_sol}
\end{center}
\vskip -0.3in
\end{figure} 

%% Numerical results.
\noindent\textbf{Numerical experiments on the full dataset:}
We now report the numerical experiments on the biology dataset and compare the methods. We test both algorithms with four different values of $\rho$, namely $\rho =
0.25$, $\rho = 0.1$, $\rho = 0.05$ and $\rho = 0.01$.  The numerical results are reported in Table \ref{tbl:realdata_table}.
Since QUIC exceeds the maximum number of iterations $i_{\max} = 200$ and takes a long time, we do not report the results corresponding to $\rho = 0.01$.
We note again that at the time of this ICML submission, our implementation is done in MATLAB, while QUIC code is (carefully!) implemented in C. Hence, our timings may improve.
\begin{table*}%[!htp]
\caption{Summary of comparison results on real world datasets.} \label{tbl:realdata_table}
% \vskip 0.15in
\newcommand{\cell}[1]{{\!\!\!\!}#1{\!\!}}
\newcommand{\cellbf}[1]{{\!\!\!}\textbf{#1}{\!\!}}
\begin{center}
\begin{footnotesize}
\begin{tabular}{l|rrr|rrr|rrr|rrr}\hline
\multicolumn{1}{l}{\cell{Algorithm}} & \multicolumn{3}{|c|}{\cell{$\rho=0.25$}} & \multicolumn{3}{|c|}{\cell{$\rho=0.1$}} &
\multicolumn{3}{|c|}{\cell{$\rho=0.05$}} & \multicolumn{3}{|c}{\cell{$\rho=0.01$}} \\ \hline
                                         & \cell{\#iter} & \cell{time[s]} & \cell{$F(\T_i)$} & \cell{\#iter} & \cell{time[s]} & \cell{$F(\T_i)$}
					  & \cell{\#iter} & \cell{time[s]} & \cell{$F(\T_i)$} & \cell{\#iter} & \cell{time[s]} & \cell{$F(\T_i)$} \\ \hline
%%%%%%%%%%%%%%%%%%%%%%%%
\multicolumn{1}{l|}{} & \multicolumn{12}{c}{\textbf{{\color{blue}Lymph Problem} ~($p = 587$)}} \\ \hline
DPNGS                                     & \cell{19} & \cell{49.028} & \cell{613.26} & \cell{29} & \cell{61.548} & \cell{341.89}
                                         & \cell{40} & \cell{66.635} & \cell{133.60} & \cell{69} & \cell{104.259} & \cell{-414.17} \\   
DPNGS(10)                                 & \cell{39} & \cell{7.470} & \cell{613.42} & \cell{34} & \cell{8.257} & \cell{342.12}
                                         & \cell{43} & \cell{8.678} & \cell{133.87} & \cell{78} & \cell{35.543} & \cell{-413.82}\\
DPNGS(5)		                         & \cell{61} & \cell{7.067} & \cell{615.87} & \cell{30} & \cell{4.323} & \cell{344.72}
                                         & \cell{41} & \cell{5.862} & \cell{136.37} & \cell{69} & \cell{123.552} & \cell{-414.17} \\
QUIC~(C~code)                            & \cell{22} & \cell{8.392} & \cell{613.25} & \cell{44} & \cell{33.202} & \cell{341.88}
                                         & \cell{82} & \cell{176.135} & \cell{133.60} & \cell{201} & \cell{2103.788} & \cell{-414.17} \\ \hline
%%%%%%%%%%%%%%%%%%%%%%%%
\multicolumn{1}{l|}{} & \multicolumn{12}{c}{\textbf{{\color{blue}Estrogen Problem} ~($p = 692$)}} \\ \hline
DPNGS                                     & \cell{24} & \cell{141.027} & \cell{627.87} & \cell{39} & \cell{171.721} & \cell{251.20}
                                         & \cell{52} & \cell{167.460} & \cell{-11.59} & \cell{83} & \cell{205.262} & \cell{-643.21} \\
DPNGS(10)                                 & \cell{56} & \cell{15.500} & \cell{628.10} & \cell{49} & \cell{14.092} & \cell{251.52}
                                         & \cell{59} & \cell{19.262} & \cell{-11.25} & \cell{90} & \cell{28.930} & \cell{-642.85} \\
DPNGS(5)                                  & \cell{39} & \cell{9.310} & \cell{631.53} & \cell{46} & \cell{8.388} & \cell{254.61} 
                                         & \cell{51} & \cell{9.332} & \cell{-7.69}  & \cell{81} & \cell{42.955} & \cell{-639.54} \\
QUIC~(C~code)                            & \cell{19} & \cell{7.060} & \cell{627.85} & \cell{43} & \cell{49.235} & \cell{251.19}
                                         & \cell{81} & \cell{244.242} & \cell{-11.60} &  \cell{-} & \cell{-} & \cell{-} \\ \hline
%%%%%%%%%%%%%%%%%%%%%%%%
\multicolumn{1}{l|}{} & \multicolumn{12}{c}{\textbf{{\color{blue}Arabidopsis Problem} ~($p = 834$)}} \\ \hline
DPNGS                                     & \cell{26} & \cell{174.947} & \cell{728.57} & \cell{43} & \cell{220.365} & \cell{228.16}
                                         & \cell{61} & \cell{253.180} & \cell{-146.10} & \cell{100} & \cell{430.505} & \cell{-1086.57} \\
DPNGS(10)                                 & \cell{48} & \cell{22.268} & \cell{728.96} & \cell{45} & \cell{22.404} & \cell{228.57}
                                         & \cell{60} & \cell{26.007} & \cell{-145.72} & \cell{200} & \cell{101.428} & \cell{-1038.60} \\
DPNGS(5)                                  & \cell{38} & \cell{9.826} & \cell{733.67} & \cell{44} & \cell{11.113} & \cell{233.04}
					  & \cell{57} & \cell{18.378} & \cell{-141.84} & \cell{95} & \cell{73.948} & \cell{-1083.53} \\
QUIC~(C~code)                            & \cell{21} & \cell{19.684} & \cell{728.52} & \cell{49} & \cell{116.016} & \cell{228.14}
                                         & \cell{95} & \cell{562.532} & \cell{-146.13} & \cell{-} & \cell{-} & \cell{-} \\ \hline
%%%%%%%%%%%%%%%%%%%%%%%%
\multicolumn{1}{l|}{} & \multicolumn{12}{c}{\textbf{{\color{blue}Leukemia Problem} ~($p = 1255$)}} \\ \hline
DPNGS                                     & \cell{28} & \cell{669.548} & \cell{1143.79}  & \cell{48} & \cell{624.145} & \cell{386.37}
                                         & \cell{71} & \cell{726.688} & \cell{-279.93} & \cell{130} & \cell{1398.133} & \cell{-2071.33} \\
DPNGS(10)                                 & \cell{65} & \cell{82.497} & \cell{1144.66} & \cell{48} & \cell{60.108} & \cell{387.26}
                                         & \cell{68} & \cell{84.017} & \cell{-279.12} & \cell{126} & \cell{166.567} & \cell{-2070.02} \\
DPNGS(5)                                  & \cell{49} & \cell{38.317} & \cell{1154.13} & \cell{48} & \cell{37.273} & \cell{395.08}
                                         & \cell{70} & \cell{50.886} & \cell{-271.01} & \cell{124} & \cell{258.090} & \cell{-2060.25} \\
QUIC~(C~code)                            & \cell{18} & \cell{69.826} & \cell{1143.76} & \cell{41} & \cell{344.199} & \cell{386.33}
                                         & \cell{76} & \cell{1385.577} & \cell{-280.07} & \cell{-} & \cell{-} & \cell{-} \\ \hline
%%%%%%%%%%%%%%%%%%%%%%%%
\multicolumn{1}{l|}{} & \multicolumn{12}{c}{\textbf{{\color{blue}Hereditary Problem} ~($p = 1869$)}} \\ \hline
DPNGS                                     & \cell{41} & \cell{2645.875} & \cell{1258.31} & \cell{82} & \cell{3805.608} & \cell{-348.49} 
                                         & \cell{113} & \cell{5445.974} & \cell{-1609.59} & \cell{183} & \cell{9020.237} & \cell{-4569.85} \\
DPNGS(10)                                 & \cell{63} & \cell{242.528} & \cell{1261.15}  & \cell{80} & \cell{297.131} & \cell{-345.47}
					  & \cell{126} & \cell{435.159} & \cell{-1606.67} & \cell{190} & \cell{732.802} & \cell{-4566.66} \\
DPNGS(5)                                  & \cell{58} & \cell{129.821} & \cell{1290.34} & \cell{79} & \cell{169.817} & \cell{-313.87}
					  & \cell{126} & \cell{439.386} & \cell{-1606.67} & \cell{179} & \cell{1140.932} & \cell{-4537.95} \\
QUIC~(C~code)                            & \cell{21} & \cell{437.252} & \cell{1258.00} & \cell{45} & \cell{1197.895} & \cell{-348.80}
					  & \cell{84} & \cell{3182.211} & \cell{-1609.92} & \cell{-} & \cell{-} & \cell{-}  \\ \hline
\end{tabular}
\end{footnotesize}
\end{center}
\vskip -0.2in
\end{table*}

We highlight several interesting results from Table \ref{tbl:realdata_table}. First, QUIC obtains the highest accuracy results in most cases, which we
attribute to the ``lack of soft thresholding'' in our algorithm. As the DPNGS algorithm carries around a score of extremely small numbers (effectively making
the solution dense in the numerical sense), its solutions are close to QUIC's solutions within numerical precision. Moreover, QUIC is extremely efficient when the
$\rho$ value is large, since it exploits the sparsity of the putative solutions via coordinate descent. Unsurprisingly, QUIC slows down significantly as $\rho$
is decreased. 

DPGNS(5) and DPNGS(10) can obtain near optimal solutions quite rapidly. In particular, DPNGS(10) seems to be the most competitive across the board, 
often taking a fraction of QUIC's time to provide a very close solution. Hence, one can expect these schemes to be used for initializing
other algorithms. For instance, QUIC can be a good candidate. We observed in all cases that, in the first few iterations, QUIC performs several
Cholesky decompositions to stay within the positive definite cone. As the complexity of such operation is large, our step-size selection within QUIC or a DPNGS(10)
initialization can be helpful.

%\textbf{VOLKAN: Please write the conclusion.}
% %% 6.1. Convergence of the algorithm.
% % \paragraph{Convergence behaviour:}
% \noindent\textbf{Convergence behaviour:}
% First, we verify the convergence behaviour of Algorithm \ref{alg:PNA_for_graph_select} by plot the the quantity $\log(\lambda_i)$, where $\lambda_i$ is defined
% by \eqref{eq:r_norm_app}. We note that $\lambda_i$ reveals the weighted norm of proximal-gradient mapping of the problem. 
% The test is based on $10$ random problems with the size ranging from $p = 100$ to $p = 1000$. 
% The convergence behaviour is plotted in Figure \ref{fig:conv_behavior}.
% \begin{figure}[ht]
% % \vskip 0.2in
% \begin{center}
% % \centerline{\includegraphics[width=0.9\columnwidth]{conv_rate}}
% \centerline{\includegraphics[width=8cm, height=5cm]{conv_rate}}
% \caption{Convergence behaviour of Algorithm \ref{alg:PNA_for_graph_select}}\label{fig:conv_behavior}
% \end{center}
% \vskip -0.3in
% \end{figure} 
% We can observe that  with problems of small sizes, Algorithm \ref{alg:PNA_for_graph_select} requires few iterations to reach a very high
% accuracy, namely, maximum $23$ iterations to reach the approximate solution with the accuracy $10^{-11}$.

%% 7. Conclusion.
\vspace{-3mm}
\section{Conclusions}\label{sec:conclude}
In this paper, we present the new composite self-concordant optimization framework. As a concrete application, we demonstrate that graph learning is possible without any Cholesky decompositions via analytic step-size selection as well as without matrix inversions via a careful dual formulation within our framework. By exploiting the self-concordance in the composite graph learning
objective,  we provide an optimal step-size for this class of composite minimization with proximal Newton methods. We show that within the dual formulation
of the Newton subproblem, we do not need to explicitly calculate the gradient as it appears in a multiplication form with the Hessian. Thanks to the special structure of this multiplication, we avoid matrix inversions in graph learning. Overall, we expect our optimization framework to have more applications in signal processing/machine learning and be amenable to various parallelization techniques, beyond the ones considered in the graph learning problem.
%
%While our theoretical results remove the $\mathcal{O}(p^3)$ complexity bottleneck by using only matrix operations, our  MATLAB implementation needs to be improved to be competitive with the state-of-the-art QUIC algorithm. Hence, we plan to explore primal subproblem solutions with our step-size for computational trade-offs. 
%\input{acknowledgements}
%\input{appendix}
%\input{appendix2}
% \cleardoublepage
% \bibliography{tran_bibtex_new}

% \bibliographystyle{icml2013}
\newpage
\appendix
\icmltitle{Supplementary material}
%% A. Appendix - The proofs of technical statements.
% \cleardoublepage
% \newpage
\appendix
%\centerline{\Large \textbf{Supplementary document}}
\section{The proofs of technical statements}\label{app:proofs}

%% A.1. The proof of Theorem 3.2.
\subsection{The proof of Theorem 3.2}
%We provide a full-proof of Theorem \ref{th:quad_converg_FPNM}.
% %% Proof of Theorem 2.
\begin{proof}
Let $\mathbf{x}^k\in\dom{F}$, we define
\begin{align}
& P_k^g := (\nabla^2f(\textbf{x}^k) + \partial{g})^{-1}, \nonumber\\
&S_k({\bf z}) := \nabla^2f(\mathbf{x}^k){\bf z} - \nabla{f}({\bf z}).\nonumber
\end{align} 
and
\begin{align*}%\label{eq:th31_proof1}
\mathbf{e}_k \equiv {\bf e}_k(\mathbf{x}) := [\nabla^2f(\textbf{x}^k) - \nabla^2f(\textbf{x})]\mathbf{d}^k). \nonumber
\end{align*} 
It follows from the optimality condition (7) in the main text that 
\begin{align}
{\bf 0} \in \partial{g}(\textbf{x}^{k+1}) + \nabla{f}(\textbf{x}^k) + \nabla^2f(\textbf{x}^k)(\textbf{x}^{k+1}-\textbf{x}^k). \nonumber
\end{align} 
This condition can be written equivalently to
\begin{align}
S_k(\mathbf{x}^k) + \mathbf{e}_k(\mathbf{x}^k) \in \nabla^2f(\textbf{x}^k)\textbf{x}^{k+1} + \partial{g}(\textbf{x}^{k+1}). \nonumber
\end{align} 
Therefore, the last relation leads to
\begin{align}\label{eq:th31_proof2}
\textbf{x}^{k+1} = P_k^g(S_k(\textbf{x}^k) + {\bf e}_k). 
\end{align} 
If we define $\mathbf{d}^k := \mathbf{x}^{k+1} - \mathbf{x}^k$ then
\begin{align}
\mathbf{d}^k = P_k^g(S_k(\textbf{x}^k) + {\bf e}_k) - \mathbf{x}^k.\nonumber
\end{align} 
Consequently, we also have
\begin{align}\label{eq:th31_proof3}
\mathbf{d}_{k+1} = P_k^g(S_k(\mathbf{x}^{k+1}) + \mathbf{e}_{k+1}) - \mathbf{x}^{k+1}.
\end{align}
We consider the norm $\lambda_k^1 := \norm{\textbf{d}^{k+1}}_{\bf{x}^k}$. 
By using the nonexpansive property of $P_k^{g}$, it follows from \eqref{eq:th31_proof2} and \eqref{eq:th31_proof3} that
\begin{align}\label{eq:th31_proof4}
\lambda_k^1 &= \norm{\mathbf{d}^{k+1}}_{\mathbf{x}^k} \nonumber\\
& = \norm{ P_k^g\left( S_k(\textbf{x}^{k+1}) + {\bf e}_{k+1}\right) - P_k^g\left(S_k(\textbf{x}^k) + {\bf e}_k\right)}_{\textbf{x}^k}
\nonumber\\
&\overset{\tiny(5)}{\leq} \norm{S_k(\textbf{x}^{k+1}) + {\bf e}_{k+1} - S_k(\textbf{x}^k) - {\bf e}_k}_{\textbf{x}^k}^{*} \nonumber\\
&\leq\norm{\nabla{f}(\textbf{x}^{k+1}) - \nabla{f}(\textbf{x}^k) - \nabla^2{f}(\textbf{x}^k)(\textbf{x}^{k+1}-\textbf{x}^k)}_{\textbf{x}^k}^{*} \nonumber\\
&+ \norm{{\bf e}_{k+1} - {\bf e}_k}_{\textbf{x}^k}^{*}\nonumber\\
&=\left[\norm{\int_0^1[\nabla^2{f}(\mathbf{x}^k_{\tau}) -
\nabla^2{f}(\textbf{x}^k)](\textbf{x}^{k+1}-\textbf{x}^k)d\tau}_{\textbf{x}^k}^{*}\right]_{[1]}\nonumber\\
& + \left[\norm{{\bf e}_{k+1} - {\bf e}_k}_{\textbf{x}^k}^{*}\right]_{[2]},
\end{align}
where $\mathbf{x}^k_{\tau} := \textbf{x}^k + \tau(\textbf{x}^{k+1}-\textbf{x}^k)$.
First, we estimate the first term in the last line of \eqref{eq:th31_proof4} which we denote by $[\cdot]_{[1]}$.
Now, we define
\begin{align}
\mathbf{M}_k := \int_0^1[\nabla^2{f}({\bf x}^k + \tau(\textbf{x}^{k+1}-\textbf{x}^k)) - \nabla^2f(\textbf{x}^k)]d\tau, \nonumber
\end{align} 
and 
\begin{align}
\mathbf{N}_k := \nabla^2f(\textbf{x}^k)^{-1/2}\mathbf{M}_k\nabla^2f(\textbf{x}^k)^{-1/2}. \nonumber
\end{align} 
Similar to the proof of Theorem 4.1.14 in (Nesterov, 2004), we can show that $\norm{{\bf N}_k} \leq
(1-\norm{\textbf{d}^k}_{\mathbf{x}^k})^{-1}\norm{\textbf{d}^k}_{\mathbf{x}^k}$. 
Combining this inequality and \eqref{eq:th31_proof4} we deduce
\begin{align}\label{eq:th2_est2}
[\cdot]_{[1]} &= \norm{\mathbf{M}_k{\bf d}^k}_{\textbf{x}^k}^{*} \!\leq\! \norm{{\bf N}_k}\norm{\textbf{d}^k}_{\bf{x}^k}\nonumber\\
& = (1 -\lambda_k)^{-1}\lambda_k^2.
\end{align}
Next, we estimate the second term of \eqref{eq:th31_proof4} which is denoted by $[\cdot]_{[2]}$. We note that ${\bf e}_k = \mathbf{e}_k(\textbf{x}^k) = 0$ and
\begin{align}
{\bf e}_{k+1} = \mathbf{e}_{k+1}(\textbf{x}^{k+1}) = [\nabla^2f(\textbf{x}^k) - \nabla^2f(\textbf{x}^{k+1})]\textbf{d}^{k+1}. \nonumber
\end{align} 
Let
\begin{align} 
\mathbf{P}_k := \nabla^2f(\textbf{x}^k)^{-1/2}[\nabla^2f(\textbf{x}^{k+1}) - \nabla^2f(\textbf{x}^k)]\nabla^2f(\textbf{x}^k)^{-1/2}. \nonumber
\end{align}
By applying Theorem 4.1.6 in (Nesterov, 2004), we can estimate $\norm{\mathbf{P}_k}$ as
\begin{align}\label{eq:norm_Q}
\norm{\mathbf{P}_k} &\leq \max\set{1 - (1-\norm{\textbf{d}^k}_{\mathbf{x}^k})^2, \frac{1}{(1-\norm{\textbf{d}^k}_{\mathbf{x}^k})^2}-1} \nonumber\\
& = \frac{2\lambda_k - \lambda_k^2}{(1-\lambda_k)^2}.
\end{align}
Therefore, from the definition of $[\cdot]_{[2]}$ we have
\begin{align}\label{eq:th31_proof5}
[\cdot]_{[2]}^2 &= [\norm{\mathbf{e}_{k+1} - \mathbf{e}_k}_{\textbf{x}^k}^{*}]^2 \nonumber\\
& = (\mathbf{e}_{k+1} - \mathbf{e}_k)^T\nabla^2f(\textbf{x}^k)^{-1}(\mathbf{e}_{k+1} - \mathbf{e}_k) \nonumber\\
& = (\mathbf{d}^{k+1})^T\nabla^2f(\textbf{x}^k)^{1/2}\mathbf{P}_k^2\nabla^2f(\textbf{x}^k)^{1/2}\mathbf{d}^{k+1}\nonumber\\
&\leq \norm{\mathbf{P}_k}^2\norm{\mathbf{d}^{k+1}}_{\mathbf{x}^k}^2.
\end{align}
By substituting \eqref{eq:norm_Q} into \eqref{eq:th31_proof5} we obtain
\begin{equation}\label{eq:th31_proof6}
[\cdot]_{[2]} \leq \frac{2\lambda_k - \lambda_k^2}{(1-\lambda_k)^2}\lambda_k^1.
\end{equation}
Substituting \eqref{eq:th2_est2} and \eqref{eq:th31_proof6} into \eqref{eq:th31_proof4} we obtain
\begin{align*}
\lambda_k^1 \leq \frac{\lambda_k^2}{1-\lambda_k} + \frac{2\lambda_k - \lambda_k^2}{(1-\lambda_k)^2}\lambda_k^1.
\end{align*}
By rearrange this inequality we obtain
\begin{align}\label{eq:th2_est5}
\lambda_k^1 \leq \left[\frac{1-\lambda_k}{1 - 4\lambda_k + 2\lambda_k^2}\right]\lambda_k^2.
\end{align}
On the other hand, by applying Theorem 4.1.6 in (Nesterov, 2004), we can easily show that
\begin{align}\label{eq:th2_est6}
\lambda_{k+1} = \norm{\textbf{d}^{k+1}}_{\textbf{x}^{k+1}} \leq \frac{\norm{\textbf{d}^{k+1}}_{\mathbf{x}^k}}{1 - \norm{\textbf{d}^k}_{\mathbf{x}^k}} =
\frac{\lambda_k^1}{1-\lambda_k}.
\end{align}
Combining \eqref{eq:th2_est5} and \eqref{eq:th2_est6} we obtain
\begin{align*}
\lambda_{k+1} \leq \frac{\lambda_k^2}{1-4\lambda_k + 2\lambda_k^2},
\end{align*}
which is (11) in the main text.
Finally, we consider the sequence $\set{\textbf{x}^k}_{k\geq 0}$ generated by (9) in the main text. From
(11) in the main text, we have
\begin{align}
\lambda_1 &\leq (1-4\lambda_0 + 2\lambda_0^2)^{-1}\lambda_0^2 \nonumber\\
& \leq (1-4\sigma + 2\sigma^2)^{-1}\sigma^2 \nonumber \\ &\leq \sigma \nonumber
\end{align} 
provided that $0 < \sigma \leq \frac{5-\sqrt{17}}{4}\approx 0.219224$. 
By induction, we can conclude that $\lambda_k \leq \beta$ for all $k\geq 0$. It follows from (11) in the main text that 
\begin{align}
\lambda_{k+1} \leq (1 - 4\sigma + 2\sigma^2)^{-1}\lambda_k^2\nonumber 
\end{align} 
for all $k$, which shows that $\set{\norm{\textbf{x}^k - \textbf{x}^{*}}_{\textbf{x}^k}}$ converges to zero at a quadratic rate.
\end{proof}

%% The proof of Theorem 3.4
\subsection{The proof of Theorem 3.5}
\begin{proof}
First, we note that 
\begin{align}
\textbf{x}^{k+1} = \textbf{x}^k + \alpha_k\textbf{d}^k  = \textbf{x}^k + (1+\lambda_k)^{-1}\mathbf{x}^k. \nonumber
\end{align} 
Hence, we can estimate $\textbf{d}^{k+1}$ as
\begin{equation}\label{eq:th2_est52} 
\lambda_{k+1}   \leq \frac{\norm{\textbf{d}^{k+1}}_{\mathbf{x}^k}}{1 - \alpha_k\lambda_k} =
(1 + \lambda_k)\norm{\textbf{d}^{k+1}}_{\mathbf{x}^k}. 
\end{equation}
By a similar approach as the proof of Theorem 3.5, we can estimate $\norm{\mathbf{d}}_{\mathbf{x}^k}$ as
\begin{equation*}
\norm{\mathbf{d}}_{\mathbf{x}^k} \leq \frac{2\lambda_k^2}{(1+\lambda_k)(1-2\lambda_k - \lambda_k^2)}.
\end{equation*}
Combining this inequality  and \eqref{eq:th2_est52} we obtain (19) in the main text.

In order to prove the quadratic convergence, we first show that if $\lambda_k\leq \sigma$ then $\lambda_{k+1}\leq \sigma$ for all $k\geq 0$.
Indeed, we note that the function:
\begin{align}
\varphi(t) := 2t(1-2t - t^2)^{-1} \nonumber
\end{align} is increasing in $[0, 1-1/\sqrt{2}]$.
Let $\lambda_0 \leq \sigma$. From (19) we have:
\begin{align}
\lambda_1 \leq 2\sigma^2(1-2\sigma -\sigma^2). \nonumber
\end{align} Therefore, if
\begin{align}
2\sigma^2(1-2\sigma -\sigma^2)\leq \sigma, \nonumber
\end{align} then $\lambda_1\leq\sigma$. The last requirement leads to $0 < \sigma \leq \bar{\sigma} := \sqrt{5}-2 \approx 0.236068$.
From this argument, we conclude that if $\sigma \in (0, \bar{\sigma}]$ then if $\lambda_0\leq\sigma$ then $\lambda_1\leq\sigma$. By induction, we have
$\lambda_k\leq\sigma$ for $k\geq 0$. If we define
\begin{align}
c := 2(1-2\sigma-\sigma^2)^{-1} \nonumber
\end{align} then $c > 0$ and (19) implies $\lambda_{k+1}\leq
c\lambda^2$ which shows that the sequence $\{\lambda_k\}_{k\geq 0}$ locally converges to $0$ at a quadratic rate.
\end{proof}
%% End of the proof.

%% The proof of Lemma 3.1.
\subsection{The proof of Lemma 2.2.}
\begin{proof}
From the self-concordance of $f$ we have:
\begin{align}
\omega(\norm{{\bf y}-{\bf x}}_{\bf{x}}) + f(\textbf{x}) + \nabla{f}(\textbf{x})^T({\bf y}-\textbf{x}) \leq f({\bf y}). \nonumber
\end{align} On the other hand, since $g$ is convex we have
\begin{align}
g({\bf y})\geq g(\textbf{x}) + {\bf v}^T({\bf y} - \textbf{x}) \nonumber
\end{align} for any ${\bf v}\in\partial{g}(\textbf{x})$. Hence, 
\begin{align}
F({\bf y}) &\geq F(\textbf{x}) + [\nabla{f}(\textbf{x}) + {\bf v}]^T({\bf y}-\textbf{x}) + \omega(\norm{{\bf y} - {\bf x}}_{{\bf x}}) \nonumber \\ 
&\geq F(\textbf{x}) - \lambda(\textbf{x})\norm{{\bf y} - {\bf x}}_{{\bf x}} + \omega(\norm{{\bf y} - {\bf x}}_{{\bf x}}), \nonumber
\end{align} where $\lambda(\textbf{x}) := \norm{\nabla{f}(\textbf{x}) +
\bf{v}}_\textbf{x}^{*}$. Let: 
\begin{align}
\mathcal{L}_F(F(\textbf{x})) := \set{ {\bf y}\in\mathbb{R}^n ~|~ F({\bf y}) \leq F(\textbf{x})} \nonumber
\end{align} be a sublevel set of $F$. For any $y\in\mathcal{L}_F(F(\textbf{x}))$ we have $F({\bf y}) \leq F(\textbf{x})$ which leads to:
\begin{align}
\lambda(\textbf{x})\norm{{\bf y} - {\bf x}}_{{\bf x}} \geq \omega(\norm{{\bf y} - {\bf x}}_{{\bf x}}) \nonumber
\end{align} due to the previous inequality. Note that $\omega$ is a convex and strictly increasing, the equation
$\lambda(\textbf{x})t = \omega(t)$ has unique
solution $\bar{t} > 0$ if $\lambda(\textbf{x}) < 1$. Therefore, for any $0\leq t \leq \bar{t}$ we have $\norm{\bf{y}-\bf{x}}_{\bf{x}} \leq \bar{t}$. This
implies that
$\mathcal{L}_F(F(\textbf{x}))$ is bounded. Hence, $\textbf{x}^{*}$ exists. The uniqueness of $\textbf{x}^{*}$ follows from the increase of $\omega$.
\end{proof}
%% End of the proof.

\section{A fast projected gradient algorithm}{\label{FPGM}}
For completeness, we provide here a variant of the fast-projected gradient method for solving the dual subproblem
(25) in the main text. Let us recall that $\mathtt{clip}_r(X) := \mathrm{sign}(X)\min\{\abs{X},r\}$ (a point-wise operator). The algorithm is
presented as follows.

%% Algorithm 3.
\vskip-0.1cm
\begin{algorithm}[!ht]\caption{(\textit{Fast-projected-gradient algorithm})}\label{alg:FPGA_subprob}
\begin{algorithmic}   
   \STATE\textbf{Input: } The current iteration $\T_i$ and a given tolerance $\varepsilon_{\mathrm{in}} > 0$.
   \STATE\textbf{Output: } An approximate solution ${\bf U}_k$ of (25) in the main text.
   \STATE {\bfseries Initialization:} Compute a Lipschitz constant $L$ and find a starting point ${\bf U}_0\succ 0$. 
   \STATE Set ${\bf V}_0 := {\bf U}_0$, $t_0 := 1$. 
   \FOR{$k=0$ {\bfseries to} $k_{\max}$}
      \STATE 1. ${\bf V}_{k\!+\!1} \!:=\! \mathtt{clip_1}\left({\bf U}_k \!-\! \frac{1}{L}\left[\T_i({\bf U}_k \!+\! \frac{1}{\rho}\hat{\Sigma})\T_i \!-\!
\frac{2}{\rho}\T_i\right]\right)$.
      \STATE 2. If $\norm{{\bf V}_{k+1}-{\bf V}_k}_{\mathrm{Fro}} \leq \varepsilon_{\mathrm{in}}\max\{1,\norm{{\bf V}_k}_{\mathrm{Fro}}\}$ then terminate.
      \STATE 3. $t_{k+1} := 0.5(1+ \sqrt{1 \!+\! 4t_k^2})$ and $\beta_k := \frac{t_k \!-\! 1}{t_{k\!+\!1}}$.
      \STATE 4. ${\bf U}_{k+1} := {\bf V}_{k+1} + \beta_k({\bf V}_{k+1}-{\bf V}_k)$.
   \ENDFOR
\end{algorithmic}
\end{algorithm}
The main operator in Algorithm \ref{alg:FPGA_subprob} is $\T_i{\bf U}_k\T_i$ at Step 2, where $\T_i$ and ${\bf U}_k$ are symmetric and $\T_i$ may be sparse.
This operator requires twice matrix-matrix multiplications.
The worst-case complexity of Algorithm \ref{alg:FPGA_subprob} is typically $O\left(\sqrt{\frac{L}{\varepsilon_{\mathrm{in}}}}\right)$ which is sublinear. If
$\mu = \lambda_{\min}(\T_i)$, the smallest eigenvalue of $\T_i$, is available, we can set $\beta_k := \frac{\sqrt{L}-\sqrt{\mu}}{\sqrt{L}+\sqrt{\mu}}$ and we
get a linear convergence rate.

\end{document}